\documentclass[sigconf, nonacm]{acmart}
\renewcommand\footnotetextcopyrightpermission[1]{}
\usepackage{algorithm}
\usepackage{algpseudocode}
\usepackage{amsmath}
\usepackage{mathtools}
\usepackage{pifont}
\usepackage{bm}
\usepackage{microtype}
\usepackage{graphicx}
\usepackage{booktabs}  
\usepackage{multirow}
\usepackage{xcolor}     
\usepackage{newunicodechar}
\newunicodechar{❶}{\ding{172}}
\newunicodechar{❷}{\ding{173}}
\usepackage{colortbl}
\usepackage{bbding}
\usepackage{wrapfig}
\usepackage[capitalize,noabbrev]{cleveref}
\definecolor{mygreen}{RGB}{46, 139, 87}  
\definecolor{myred}{RGB}{205, 92, 92}     
\definecolor{mygray}{gray}{0.6}  
\definecolor{lightyellow}{RGB}{255, 250, 205} 
\definecolor{algcomment}{RGB}{70,130,180}
\newcommand{\conf}[1]{\textcolor{mygray}{\scriptsize #1}}
\usepackage{amsthm}
\newtheorem{theorem}{Theorem}
\newtheorem{lemma}{Lemma}
\newtheorem{corollary}{Corollary}

\newcommand{\myconf}[1]{\textcolor{mygray}{\scriptsize #1}}
\newcommand{\up}[1]{\ensuremath{_{\textcolor{mygreen}{\uparrow #1}}}}
\newcommand{\down}[1]{\ensuremath{_{\textcolor{myred}{\downarrow #1}}}}
\newcommand{\second}[1]{\underline{#1}}
\newcommand{\best}[1]{\textbf{#1}}
\AtBeginDocument{%
  }

\setcopyright{acmlicensed}
\copyrightyear{2018}
\acmYear{2018}
\acmDOI{XXXXXXX.XXXXXXX}
\acmConference[Conference acronym 'XX]{Make sure to enter the correct
  conference title from your rights confirmation email}{June 03--05,
  2018}{Woodstock, NY}
\acmISBN{978-1-4503-XXXX-X/2018/06}
\begin{document}

%%
%% The "title" command has an optional parameter,
%% allowing the author to define a "short title" to be used in page headers.
\title{Structural Entropy-Driven Graph Diffusion Generation for One-Shot Federated
Graph Learning}

%%
%% The "author" command and its associated commands are used to define
%% the authors and their affiliations.
%% Of note is the shared affiliation of the first two authors, and the
%% "authornote" and "authornotemark" commands
%% used to denote shared contribution to the research.

%%
%% By default, the full list of authors will be used in the page
%% headers. Often, this list is too long, and will overlap
%% other information printed in the page headers. This command allows
%% the author to define a more concise list
%% of authors' names for this purpose.
%% ==================== 作者与单位信息 ====================
% 1. 共一第一顺位
\author{Shutong Zheng}
\authornote{Both authors contributed equally to this research.}
\affiliation{%
  \institution{Sun Yat-sen University}
  \city{Guangzhou}
  \country{China}
}
\email{zhengsht29@mail2.sysu.edu.cn}

% 2. 共一第二顺位
\author{Lele Fu}
\authornotemark[1]
\affiliation{%
  \institution{Sun Yat-sen University}
  \city{Guangzhou}
  \country{China}
}
\email{fulle@mail2.sysu.edu.cn}

% 3. 二作
\author{Sheng Huang}
\affiliation{%
  \institution{Sun Yat-sen University}
  \city{Guangzhou}
  \country{China}
}
\email{huangsh253@mail2.sysu.edu.cn}

% 4. 三作
\author{Wei Yang Bryan Lim}
\affiliation{%
  \institution{Nanyang Technological University}
  \country{Singapore}
}
\email{bryan.limwy@ntu.edu.sg}

% 5. 通讯作者
\author{Chuan Chen}
\authornote{Corresponding author.}
\affiliation{%
  \institution{Sun Yat-sen University}
  \city{Guangzhou}
  \country{China}
}
\email{chenchuan@mail.sysu.edu.cn}

% 页眉短作者（第一作者 et al.）
\renewcommand{\shortauthors}{Zheng and Fu, et al.}
%% ========================================================

%%
%% The abstract is a short summary of the work to be presented in the
%% article.
\begin{abstract}
One-shot federated graph learning (FGL) requires the server to estimate client contributions from highly compressed information, yet conventional volume-based weighting captures the amount of client data while overlooking how its connectivity is organized. In this paper, we propose SPIRE, a Structural Entropy-Driven Graph Diffusion Generation method that introduces topology-aware client differentiation into one-shot FGL. Specifically, we employ first-order degree-distribution structural entropy as a compact descriptor of degree-mass dispersion and use it to derive structural client weights, providing an inductive bias that accounts for differences in graph topology beyond data volume. On the generation side, a graph diffusion model on the server synthesizes pseudographs conditioned on the weighted client prototypes, capturing both semantic and structural information without requiring additional client-side training. The generated pseudographs are then assembled via disjoint union fusion to train a global graph neural network. Extensive experiments on seven real-world graph datasets demonstrate that SPIRE consistently outperforms conventional and one-shot FGL methods, with particularly strong gains under highly heterogeneous (non-IID) and graph-perturbed settings.
\end{abstract}
%%
%% The code below is generated by the tool at http://dl.acm.org/ccs.cfm.
%% Please copy and paste the code instead of the example below.
%%
\begin{CCSXML}
<ccs2012>
   <concept>
       <concept_id>10010147.10010257</concept_id>
       <concept_desc>Computing methodologies~Machine learning</concept_desc>
       <concept_significance>500</concept_significance>
       </concept>
   <concept>
       <concept_id>10010147.10010919.10010172</concept_id>
       <concept_desc>Computing methodologies~Distributed algorithms</concept_desc>
       <concept_significance>300</concept_significance>
       </concept>
   <concept>
       <concept_id>10010147.10010257.10010293.10010294</concept_id>
       <concept_desc>Computing methodologies~Neural networks</concept_desc>
       <concept_significance>100</concept_significance>
       </concept>
 </ccs2012>
\end{CCSXML}

\ccsdesc[500]{Computing methodologies~Machine learning}
\ccsdesc[300]{Computing methodologies~Distributed algorithms}
\ccsdesc[100]{Computing methodologies~Neural networks}

%%
%% Keywords. The author(s) should pick words that accurately describe
%% the work being presented. Separate the keywords with commas.
\keywords{Federated Graph Learning, One-Shot Federated Learning, 
Structural Entropy, Graph Diffusion Model, Non-IID}
%% A "teaser" image appears between the author and affiliation
%% information and the body of the document, and typically spans the
%% page.

%%
%% This command processes the author and affiliation and title
%% information and builds the first part of the formatted document.
\maketitle

\section{Introduction}

Federated learning (FL) enables collaborative model training without centralizing raw data. Extending this paradigm to graph-structured data has spurred Federated Graph Learning (FGL). However, unlike conventional FL with independent samples, client graphs are inherently coupled with topology. They often differ substantially in both data distribution and structural organization, making effective knowledge aggregation exceptionally challenging.

To address this heterogeneity, advanced FGL methods (e.g., FedSage~\cite{zhang2021subgraph}, FedTAD~\cite{zhu2024fedtad}, FedATH~\cite{fu2025less}) have been proposed. While effective in improving cross-client alignment, they rely on iterative communication. In practical large-scale web and social data mining, such as collaborative recommendation systems across regional data centers or decentralized social networks, repeatedly synchronizing topological embeddings across isolated data silos incurs prohibitive bandwidth costs and exacerbates privacy risks. This operational bottleneck motivates one-shot FGL, which completes collaborative graph mining in a single communication round. Under this strict constraint, the server must aggregate information from highly compressed statistics, making the determination of actual client influence a central challenge.

\begin{figure}[t]
\centering
\includegraphics[width=0.48\textwidth]{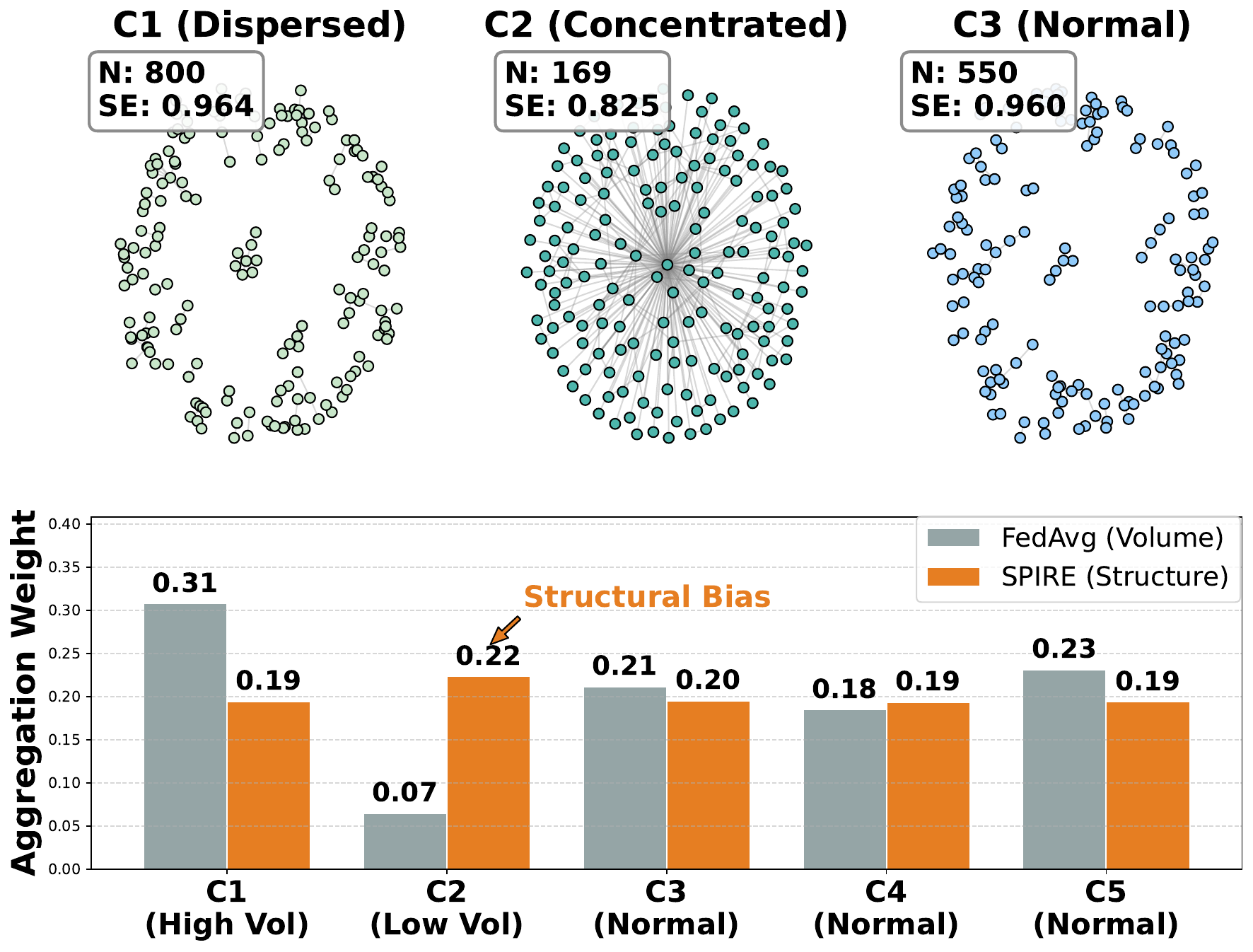}
\caption{
Illustration of the volume--structure mismatch in client weighting.
C1 is a large ``Noisy Giant'' (800 nodes), while C2 is a smaller ``Clean Expert'' (169 nodes) with differentiated community structure; C3--C5 follow standard non-IID partitions.
FedAvg favors C1 by volume, whereas SPIRE differentiates client influence using structural entropy.
}
\label{fig:weight_reversal}
\end{figure}

Existing one-shot FGL methods, such as GHOST~\cite{qian2025ghost} and OASIS~\cite{wanoasis}, typically determine client influence through data volume or uniform aggregation, leaving the structural characteristics of local graphs largely unaccounted for. Such volume-based aggregation implicitly assumes that larger clients should exert greater influence, regardless of how their connectivity is organized. This creates a fundamental volume--structure mismatch: client volume reflects how much data a client possesses, but does not characterize how that data is structurally organized. As illustrated in Fig.~\ref{fig:weight_reversal}, FedAvg assigns greater influence to C1 simply because it contains more nodes, despite the markedly different connectivity distributions of C1 and C2. This observation motivates topology-aware client differentiation: rather than relying solely on graph volume, the server should incorporate structural information when determining client influence. However, the one-shot constraint makes it impractical to rely on iterative structural alignment or direct access to client topologies, calling for a compact structural descriptor that can be obtained from limited client-side information.

To address this challenge, we propose \textbf{SPIRE}, a \textbf{S}tructural Entro\textbf{p}y-Dr\textbf{i}ven G\textbf{r}aph Diffusion G\textbf{e}neration method for one-shot FGL. Unlike node count, which reflects only graph volume, first-order structural entropy ($SE^{(1)}$) captures the underlying distribution of connectivity. We formulate this compact descriptor into an entropy-regularized influence mechanism. Rather than stopping at initial aggregation, SPIRE propagates this topology-aware bias to explicitly govern four downstream stages: semantic prototype aggregation, Wasserstein barycentric alignment, generation budget allocation, and pseudograph assembly. Guided by this influence, a conditional diffusion model synthesizes node features, which are then instantiated into topology via k-NN. By shifting generative computation to the server and bypassing dense adjacency optimization, SPIRE ensures a lightweight communication protocol with favorable large-graph scalability.

Overall, our main contributions are summarized as follows:
\begin{itemize}
    \item \textbf{Problem Insight:} We identify a fundamental volume--structure mismatch in one-shot FGL, where client data volume alone fails to reflect actual topological utility, motivating structure-aware client differentiation.
    
    \item \textbf{Topology-Aware Influence Allocation:} We propose an entropy-regularized influence mechanism that consistently propagates structural bias to govern downstream global reconstruction, rather than serving merely as a static aggregation weight.
    
    \item \textbf{Structure-Guided Reconstruction:} We develop a feature-first pseudograph synthesis via conditional diffusion and k-NN assembly, enabling scalable global GNN training while avoiding direct dense-adjacency optimization.
    
    \item \textbf{Theoretical and Empirical Validation:} We derive rigorous sensitivity bounds for entropy perturbations. Evaluations across seven datasets confirm SPIRE's state-of-the-art performance, robustness to severe heterogeneity, and practical large-scale scalability.
\end{itemize}

\section{Related Work}
\subsection{Conventional Federated Learning}
Conventional federated learning enables collaborative model
training across distributed clients without sharing raw
data~\cite{fu2025beyond,mcmahan2017communication}. While
achieving strong performance on IID data, conventional FL
suffers from statistical heterogeneity
and communication inefficiency under non-IID
settings~\cite{li2020fedprox,karimireddy2020scaffold,
li2021moon,li2024cooperative,chen2024mixed}. Recent
studies improve robustness through knowledge
distillation~\cite{zhang2025model,zhang2024fedgmkd},
generative modeling~\cite{lai2025pfedgpa,chen2025fedbip},
and prototype-based aggregation
strategies~\cite{huang2023rethinking,zhang2024fedtgp}.
However, these approaches are primarily designed for
Euclidean data and are not well suited for graph-structured
scenarios with complex topological dependencies.

\subsection{Federated Graph Learning}

Federated graph learning (FGL) extends federated learning to graph
domains and is commonly divided into graph-level and node-level
settings~\cite{lei2023federated,qu2023semi,pan2024towards}. In graph-level
FGL, clients hold isolated graph instances, and methods such as
GCFL+~\cite{xie2021federated} and FedStar~\cite{tan2023federated} learn
shared representations across heterogeneous graphs. In node-level FGL,
clients own subgraphs of a global graph, where methods including
FedBG~\cite{huang2025fedbg} and FGGP~\cite{wan2024federated} focus on
preserving cross-client structural dependencies. Recent studies further
address topological heterogeneity through topology-aware distillation
and causal decoupling, such as FedTAD~\cite{zhu2024fedtad} and
FedATH~\cite{fu2025less}.

Recent work like SEFGL~\cite{SEFGL} uses structural entropy for local graph optimization and iterative personalized prototype clustering in multi-round settings. However, under strict one-shot communication where iterative alignment is prohibited, utilizing structural descriptors shifts from localized similarity metrics to global influence allocation signals for central reconstruction.

\subsection{One-Shot Federated Learning}

One-shot federated learning reduces communication to a
single round and has recently attracted increasing
attention in bandwidth-limited scenarios ~\cite{allouah2024revisiting,zeng2024one,
tang2024fusefl,yang2024feddeo}. Existing methods can be
broadly categorized into:
(1) public-data distillation approaches,
(2) data-free generator-based methods,
and (3) synthetic-data sharing strategies.

Early methods~\cite{li2020practical,
salehkaleybar2021one,guha2019one} rely on public datasets
to distill ensemble knowledge from local models. To remove
the dependence on public data,
DENSE~\cite{zhang2022dense} trains a generator over client
model ensembles, while
FedSD2C~\cite{zhang2024one} enables clients to share
synthetic data instead of inconsistent local parameters.
One-shot federated graph learning introduces additional challenges due to the
strong coupling between node attributes and graph
topology. In the graph domain,
GHOST~\cite{qian2025ghost} introduces client-specific
proxy models to preserve topology-aware parameters during
aggregation, while
OASIS~\cite{wanoasis} constructs a structural latent space
via a topological codebook for synthetic graph generation.

However, existing one-shot FGL methods generally treat
client contributions uniformly or estimate them solely
based on data scale, without explicitly evaluating the
structural information of local graphs. Moreover, current
pseudograph generation methods often fail to preserve
meaningful topological semantics, limiting the reliability
of synthetic graph supervision.

\section{Preliminaries and Motivation}
\subsection{Background}

Consider an undirected graph $\mathcal{G} = (\mathcal{V}, \mathcal{E}, \mathbf{X}, \mathbf{Y})$, where $\mathcal{V}$ is the set of $N$ nodes and $\mathcal{E}$ is the set of edges. 
Each node $v_i \in \mathcal{V}$ is associated with a feature vector $\mathbf{x}_i \in \mathbb{R}^F$ and a corresponding label $\mathbf{y}_i$, where $\mathbf{X} \in \mathbb{R}^{N \times F}$ and $\mathbf{Y}$ denote the feature matrix and label matrix, respectively. 
The topological structure is represented by an adjacency matrix $\mathbf{A} \in \{0, 1\}^{N \times N}$, where $\mathbf{A}_{ij} = 1$ if $(v_i, v_j) \in \mathcal{E}$, and $0$ otherwise. 
Graph Neural Networks (GNNs) encode node embeddings by recursively aggregating information from neighbors. Formally, the embedding $\mathbf{h}_v^{(l)}$ of node $v$ at the $l$-th layer is updated by:
\begin{equation}
    \mathbf{h}_v^{(l+1)} = \text{UPD}\left(\mathbf{h}_v^{(l)}, \text{AGG}\left(\left\{\mathbf{h}_u^{(l)} : u \in \mathcal{N}(v)\right\}\right)\right),
\end{equation}
where $\mathcal{N}(v)$ is the neighbor set of node $v$, $\text{AGG}(\cdot)$ is an aggregation function, and $\text{UPD}(\cdot)$ denotes an update rule for node embeddings. Initially, for $l=0$, $\mathbf{h}_v^{(0)} = \mathbf{x}_v$.

Consider a federated system comprising a central server and a set of clients $\mathcal{C} = \{1, \dots, K\}$, where each client $k \in \mathcal{C}$ holds a private local graph $\mathcal{G}_k = (\mathcal{V}_k, \mathcal{E}_k, \mathbf{X}_k, \mathbf{Y}_k)$. In standard FGL, the server and clients engage in multiple communication rounds to iteratively optimize a global GNN. Specifically, the server aggregates local model parameters $\mathbf{W}_k$ to update the global model parameters $\mathbf{W}$ via:
\begin{equation}
\label{eq:volume}
    \mathbf{W} = \sum_{k=1}^{K}\frac{N_{k}}{N}\mathbf{W}_{k},
\end{equation}
where $N_{k} = |\mathcal{V}_k|$ is the node count of the $k$-th client, and $N=\sum_{k=1}^{K}N_{k}$ is the total number of nodes across all clients. 

\subsection{Motivation}

As shown in Eq.~\ref{eq:volume}, conventional aggregation explicitly relies on client data volume ($N_k$). While volume measures data size, graph utility is inherently coupled with topological structure. To decouple these factors, we construct four client archetypes on the Cora dataset by independently varying volume and degree-distribution entropy: Large/Concentrated, Small/Concentrated, Large/Dispersed, and Small/Dispersed (Figure~\ref{fig:motivation}).

The evaluation reveals a clear volume-structure mismatch: a large but structurally dispersed client frequently underperforms a small but concentrated client. Notably, while these isolated clients achieve high accuracies on their skewed local subgraphs, one-shot aggregation of such heterogeneous models causes severe weight divergence and global performance drops (evaluated in Section 6). This demonstrates that volume alone is insufficient to characterize a client's true structural contribution.

\begin{figure}[t]
  \centering
  \includegraphics[width=0.48\textwidth]{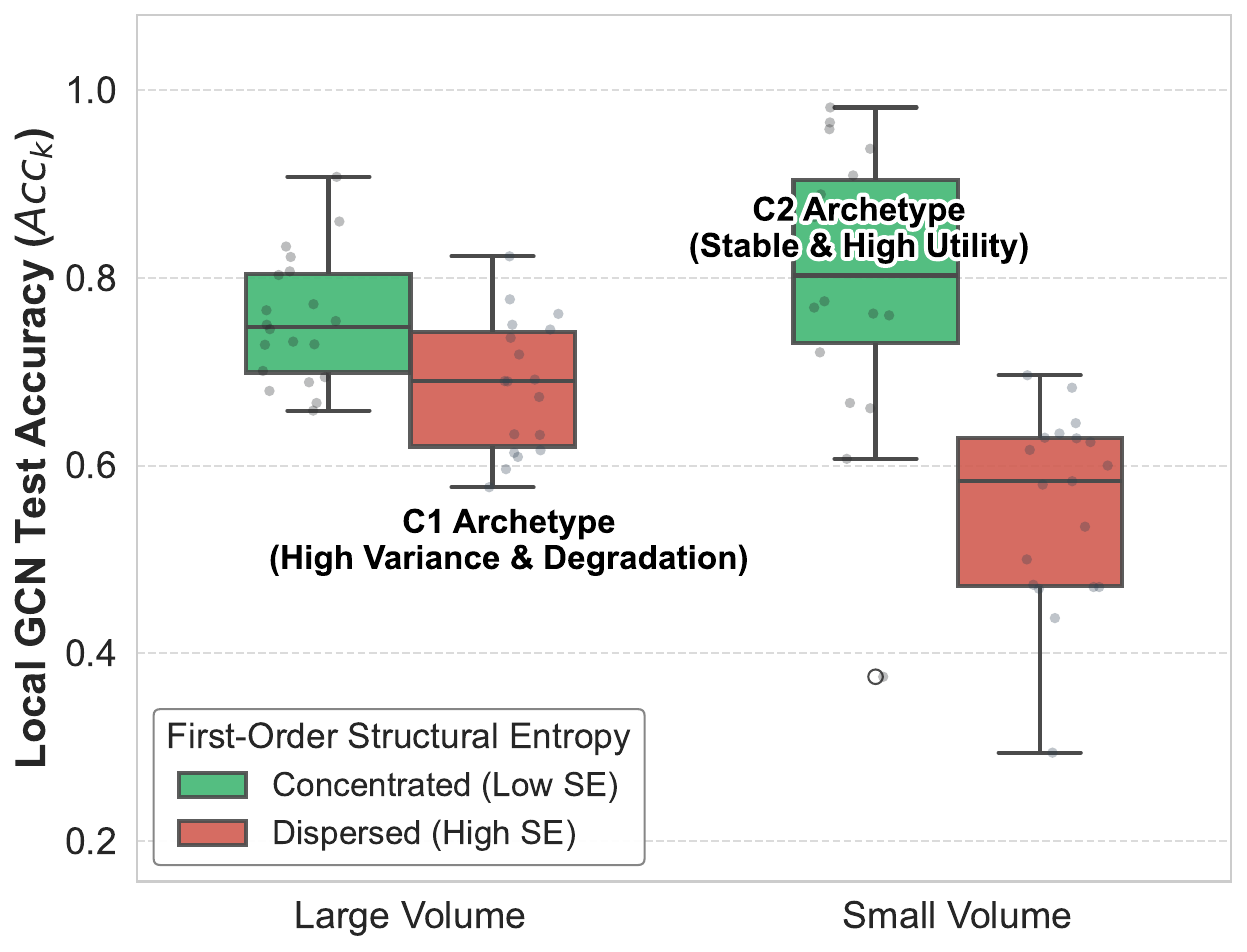}
  \vspace{-0.2in}
  \caption{
  Controlled decoupling of client volume and topology on Cora. Four client archetypes independently vary graph volume and degree-distribution entropy, revealing distinct local utilities.
  }
  \label{fig:motivation}
  \vspace{-0.15in}
\end{figure}

To capture this missing dimension, we employ \textit{first-order structural entropy} ($SE^{(1)}$) to summarize the concentration of the local degree-mass distribution. A lower $SE^{(1)}$ indicates a more concentrated topology. SPIRE thus utilizes $SE^{(1)}$ as a structural bias to differentiate client influence beyond data volume, without transmitting raw adjacency matrices.

\section{Methodology}

\begin{figure*}[t]
	\centering
	\includegraphics[width=0.98\textwidth]{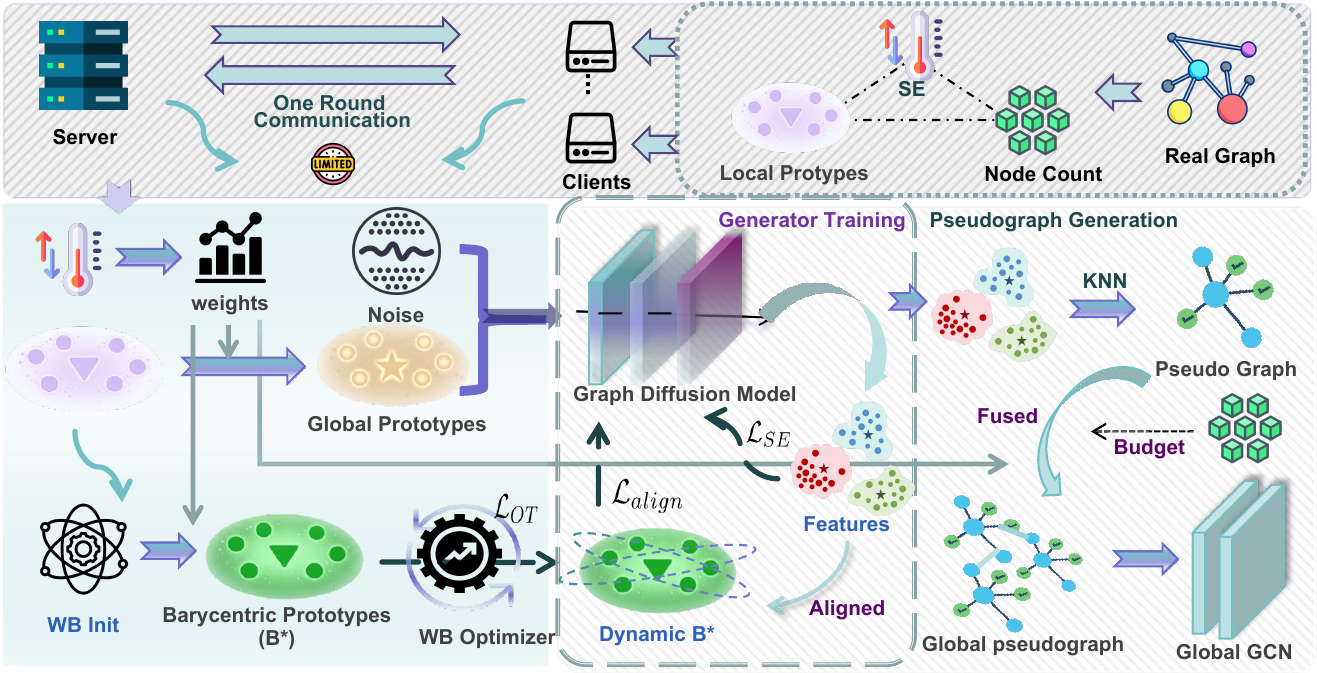}
    \caption{
    Overview of SPIRE.
    Clients upload structural entropy, class prototypes, and node counts in a single communication round.
    The server performs structural-entropy-based weighting, Wasserstein Barycenter initialization, pseudograph generation, and global GNN training.
    }
    \label{fig:architecture}
\end{figure*}

\subsection{Topology-Aware Influence Allocation}
\label{sec:quality}
In one-shot FGL, the server has no opportunity to iteratively
adjust client contributions after aggregation. The server must
therefore determine client influence from the limited statistics
available in a single communication round. This makes the choice
of a structural signal important: it should distinguish client
graphs beyond their volume without requiring access to their raw
topologies.

\textbf{First-Order Structural Entropy.}
Structural entropy provides a compact way to characterize the
organization of graph connectivity. While higher-order formulations
can capture hierarchical community structure, SPIRE uses the first-order form~\citep{DEHMER2008info} because our client-level weighting specifically requires a direct descriptor of degree-level connectivity organization.
Given a local graph $\mathcal{G}_k=(\mathcal{V}_k,\mathcal{E}_k)$, let
$d_v$ denote the degree of node $v$ and
$\mathrm{vol}(\mathcal{G}_k)=\sum_{v\in\mathcal{V}_k}d_v$ denote the
total degree volume. We define the first-order structural entropy as
the Shannon entropy of the normalized degree distribution:
\begin{equation}
H(\mathcal{G}_k)
=
-\sum_{v\in\mathcal{V}_k}
\frac{d_v}{\mathrm{vol}(\mathcal{G}_k)}
\log_2
\frac{d_v}{\mathrm{vol}(\mathcal{G}_k)}.
\label{eq:se_def}
\end{equation}
To normalize the entropy across clients with different sizes, we divide by the maximum entropy of a distribution over $N_k$ nodes:
\begin{equation}
S_k=
\frac{H(\mathcal{G}_k)}{\log_2 N_k},
\quad S_k\in[0,1],
\label{eq:se}
\end{equation}
where $N_k$ denotes the node count of client $k$ ($N_k>1$). A lower
$S_k$ indicates a more concentrated degree-mass distribution, whereas
a higher $S_k$ indicates a more dispersed one. Thus, $S_k$ provides a
first-order structural descriptor that complements client volume
$N_k$.

\textbf{Local Metadata and Client Weighting.}
Each client $k$ summarizes its local graph as a lightweight 
triplet $\{S_k, \mathcal{P}_k, N_k\}$, where 
$\mathcal{P}_k = \{\mathbf{p}_k^{(c)}\}_{c \in \mathcal{C}_k}$ 
denotes the set of class prototypes. Each prototype is computed directly on the raw feature space as the $\ell_2$-normalized class mean:
\begin{equation}
    \mathbf{p}_k^{(c)} = \frac{\mathbf{z}_k^{(c)}}
    {\|\mathbf{z}_k^{(c)}\|_2}, \quad 
    \mathbf{z}_k^{(c)} = \frac{1}{|\mathcal{V}_k^{(c)}|}
    \sum_{v \in \mathcal{V}_k^{(c)}} \mathbf{x}_v.
    \label{eq:proto}
\end{equation}
No raw features, adjacency matrices, or model parameters are 
transmitted, keeping the communication cost at $\mathcal{O}(C \times F + 1)$ per client. Upon receiving the statistics, the server maps each $S_k$ to an aggregation weight via a temperature-scaled softmax:
\begin{equation}
    \alpha_k = \frac{\exp(-S_k / \tau)}
    {\sum_{j=1}^K \exp(-S_j / \tau)},
    \label{eq:alpha_k}
\end{equation}
where $\tau > 0$ controls the weighting distribution. This inverse monotone mapping establishes a structural prior: it prioritizes clients with concentrated degree distributions, grounded in our empirical finding (Fig.~\ref{fig:motivation}) that such topologies retain clearer organizational signals under non-IID partitioning. Formally, this weighting is the unique solution to an entropy-regularized optimization problem.

\begin{theorem}[Entropy-Regularized Optimality]
\label{thm:optimality}
For $\tau>0$, the weight vector $\bm{\alpha}^* \in \Delta^{K-1}$ in Eq.~\eqref{eq:alpha_k} is the unique global minimizer of
\begin{equation}
    \bm{\alpha}^* = \arg\min_{\bm{\alpha} \in \Delta^{K-1}} \left[ \sum_{k=1}^K \alpha_k S_k + \tau \cdot \mathrm{KL}(\bm{\alpha} \| \mathbf{u}) \right], \nonumber
\end{equation}
where $u_k = 1/K$. Moreover, the weighted average structural entropy of the aggregated system is bounded by:
\begin{equation}
    0 \le \sum_{k=1}^K \alpha_k^* S_k - S_{\min} \le \tau \log K, \nonumber
\end{equation}
where $S_{\min}=\min_k S_k$.
\end{theorem}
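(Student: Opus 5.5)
The argument has two parts: a Gibbs-variational characterization of $\bm{\alpha}^*$, which gives existence and uniqueness directly, and a comparison of objective values that yields the sandwich bound.

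\textbf{Optimality and uniqueness.} Write the objective as $F(\bm{\alpha})=\sum_k \alpha_k S_k+\tau\sum_k \alpha_k\log(K\alpha_k)$, with the convention $0\log 0=0$. Define $Z=\sum_j e^{-S_j/\tau}$, so that $\alpha_k^*=e^{-S_k/\tau}/Z$ and $S_k=-\tau\log(\alpha_k^* Z)$. Substituting this expression for $S_k$ and using $\sum_k\alpha_k=1$, we get the identity
\begin{equation}
F(\bm{\alpha})=\tau\,\mathrm{KL}(\bm{\alpha}\,\|\,\bm{\alpha}^*)-\tau\log Z+\tau\log K, \nonumber
\end{equation}
valid for every $\bm{\alpha}\in\Delta^{K-1}$. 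This includes boundary points, since $\alpha_k^*>0$ for all $k$.

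Gibbs' inequality gives $\mathrm{KL}(\bm{\alpha}\|\bm{\alpha}^*)\ge 0$, with equality if and only if $\bm{\alpha}=\bm{\alpha}^*$. Hence $\bm{\alpha}^*$ is the unique global minimizer, and the minimum value is $F(\bm{\alpha}^*)=-\tau\log(Z/K)$. This route avoids Lagrange multipliers and any separate treatment of the simplex boundary. Alternatively, one could note that $F$ is strictly convex and solve the KKT conditions, but the identity is cleaner.

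\textbf{Lower bound.} This part is immediate. Each $\alpha_k^*\ge 0$ and $\sum_k\alpha_k^*=1$, so $\sum_k\alpha_k^*S_k$ is a convex combination of the $S_k$ and is therefore at least $S_{\min}$.

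\textbf{Upper bound.} Let $k_0$ be an index attaining $S_{\min}$, and compare $F$ at $\bm{\alpha}^*$ with $F$ at the vertex $\mathbf{e}_{k_0}$. Since $\mathrm{KL}(\mathbf{e}_{k_0}\|\mathbf{u})=\log K$, optimality of $\bm{\alpha}^*$ gives
\begin{equation}
\sum_k\alpha_k^*S_k+\tau\,\mathrm{KL}(\bm{\alpha}^*\|\mathbf{u})\le S_{\min}+\tau\log K. \nonumber
\end{equation}
Dropping the nonnegative term $\tau\,\mathrm{KL}(\bm{\alpha}^*\|\mathbf{u})$ yields $\sum_k\alpha_k^*S_k-S_{\min}\le\tau\log K$.

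As a cross-check, the bound also follows from log-sum-exp. Rearranging the identity gives $\sum_k\alpha_k^*S_k=-\tau\log Z+\tau\log K-\tau\,\mathrm{KL}(\bm{\alpha}^*\|\mathbf{u})$. Combining $Z\ge e^{-S_{\min}/\tau}$ with nonnegativity of the KL term gives the same conclusion.

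\textbf{Main obstacle.} The only delicate point is bookkeeping, not mathematics. The definition of KL, and hence the constant $\log K$, uses the natural logarithm, whereas $S_k$ is defined with $\log_2$. I will fix KL in nats so the stated bound holds as written. If KL were taken in base 2, the same argument would give the bound $\tau\log_2 K$.
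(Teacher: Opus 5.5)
Your proof is correct. The bound part (lower bound by convex combination, upper bound by comparing with the vertex $\mathbf{e}_{k_0}$ and dropping the nonnegative KL term) is exactly the paper's argument.

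You depart from the paper on the optimality claim. The paper gets uniqueness from strict convexity. It then writes a Lagrangian with only the equality constraint and solves $S_k+\tau(\log(K\alpha_k)+1)+\lambda=0$ \emph{for an interior minimizer}. That is the standard KKT route, and it carries over to regularizers without closed-form Gibbs structure. However, it leaves implicit why the minimizer cannot lie on the boundary of the simplex. Closing that gap requires noting either that the stationary point found is interior and KKT conditions are sufficient for convex problems, or that $\log\alpha_k+1\to-\infty$ as $\alpha_k\to 0^+$.

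Your identity $F(\bm{\alpha})=\tau\,\mathrm{KL}(\bm{\alpha}\|\bm{\alpha}^*)-\tau\log(Z/K)$ settles existence, uniqueness and the boundary case in one step through Gibbs' inequality. It also gives the optimal value explicitly, which is what makes your log-sum-exp cross-check possible.

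One refinement to your closing remark: the base of the logarithm in the KL is not a bookkeeping choice that only changes the constant $\log K$. If KL were measured in bits, the objective would be the nats objective at temperature $\tau/\ln 2$. Its minimizer would then be $\alpha_k\propto 2^{-S_k/\tau}$, not the softmax of Eq.~\eqref{eq:alpha_k}. So the first claim of the theorem itself forces natural logarithms. The $\log_2$ in the definition of $S_k$ is irrelevant, since $S_k$ enters only as a fixed number.
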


\begin{proof}
Since $\mathrm{KL}(\bm{\alpha}\|\mathbf{u})$ is strictly convex on $\Delta^{K-1}$ and $\tau>0$, the objective is strictly convex and therefore has a unique global minimizer. Its Lagrangian is $\mathcal{L}(\bm{\alpha},\lambda) = \sum_k\alpha_kS_k + \tau\sum_k\alpha_k\log(K\alpha_k) + \lambda(\sum_k\alpha_k-1)$. For an interior minimizer, setting $\partial\mathcal{L}/\partial\alpha_k=0$ gives $S_k+\tau(\log(K\alpha_k)+1)+\lambda=0$, hence $\alpha_k \propto \exp(-S_k/\tau)$. Normalizing over $k$ recovers Eq.~\eqref{eq:alpha_k}. For the bound, let $J(\bm{\alpha})$ denote the objective and $\mathbf{e}_{\min}$ be any one-hot vector corresponding to a client attaining $S_{\min}$. Since $\bm{\alpha}^*$ is optimal, $J(\bm{\alpha}^*) \le J(\mathbf{e}_{\min}) = S_{\min} + \tau\,\mathrm{KL}(\mathbf{e}_{\min}\|\mathbf{u}) = S_{\min}+\tau\log K$. Because $\mathrm{KL}(\bm{\alpha}^*\|\mathbf{u})\ge0$ and $\sum_k\alpha_k^*S_k\ge S_{\min}$, we obtain $0 \le \sum_k\alpha_k^*S_k-S_{\min} \le \tau\log K$.
\end{proof}

Theorem~\ref{thm:optimality} shows that $\tau$ is not merely a numerical smoothing parameter; it controls the trade-off between favoring lower-entropy clients and retaining regularization toward uniform aggregation. 

To examine the sensitivity of the weighting to bounded perturbations in the transmitted structural statistic, suppose that the uploaded scalar $S_k$ suffers from estimation errors.

\begin{theorem}[Robustness to Entropy Perturbations]
\label{thm:robustness}
Suppose the transmitted structural entropy is perturbed such that $|\hat{S}_k - S_k| \le \delta$ for all $k$. Let $\bm{\alpha}$ and $\hat{\bm{\alpha}}$ be the weights computed from $\{S_k\}$ and $\{\hat{S}_k\}$, respectively. Then
\begin{equation}
    \exp\left(-\frac{2\delta}{\tau}\right) \le \frac{\hat{\alpha}_k}{\alpha_k} \le \exp\left(\frac{2\delta}{\tau}\right), \quad \|\hat{\bm{\alpha}} - \bm{\alpha}\|_1 \le 2\tanh\left(\frac{\delta}{\tau}\right). \nonumber
\end{equation}
\end{theorem}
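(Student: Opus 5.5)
The ratio bound follows from a direct computation on the softmax of Eq.~\eqref{eq:alpha_k}. For the $\ell_1$ bound, I plan to reduce to a two-block (two-point) extremal problem and in fact prove the sharper bound $2\tanh(\delta/(2\tau))$, which is at most $2\tanh(\delta/\tau)$.

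\textbf{Ratio bound.} Set $\varepsilon=\delta/\tau$ and $d_k=-(\hat S_k-S_k)/\tau\in[-\varepsilon,\varepsilon]$. Writing $Z=\sum_j e^{-S_j/\tau}$ and $\hat Z=\sum_j e^{-\hat S_j/\tau}$, I will use
\[
\frac{\hat\alpha_k}{\alpha_k}=e^{d_k}\,\frac{Z}{\hat Z},
\qquad
\frac{\hat Z}{Z}=\sum_j \alpha_j e^{d_j}.
\]
The quantity $\sum_j \alpha_j e^{d_j}$ is a convex combination of values in $[e^{-\varepsilon},e^{\varepsilon}]$, so $Z/\hat Z\in[e^{-\varepsilon},e^{\varepsilon}]$. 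Combined with $e^{d_k}\in[e^{-\varepsilon},e^{\varepsilon}]$, this gives $\hat\alpha_k/\alpha_k\in[e^{-2\varepsilon},e^{2\varepsilon}]$, which is the first claim.

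\textbf{Reduction of the $\ell_1$ bound to one set.} Both vectors lie on the simplex, so $\|\hat{\bm\alpha}-\bm\alpha\|_1=2\,(\hat\alpha(A)-\alpha(A))$, where $A=\{k:\hat\alpha_k>\alpha_k\}$ and $B$ is its complement. Let $p=\alpha(A)$, and define the $\alpha$-weighted averages
\[
m_A=\frac{1}{p}\sum_{k\in A}\alpha_k e^{d_k},
\qquad
m_B=\frac{1}{1-p}\sum_{k\in B}\alpha_k e^{d_k}.
\]
Then
\[
\hat\alpha(A)=\frac{p\,m_A}{p\,m_A+(1-p)\,m_B}=\frac{p\,\rho}{p\,\rho+1-p},
\qquad \rho=\frac{m_A}{m_B}.
\]
This expression is increasing in $\rho$, and $\rho\le e^{\varepsilon}/e^{-\varepsilon}=e^{2\varepsilon}$. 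Writing $r=e^{2\varepsilon}$, it therefore suffices to bound
\[
f(p)=\frac{p\,r}{1+p(r-1)}-p
\]
over $p\in[0,1]$. The degenerate cases $p\in\{0,1\}$ give $f=0$ trivially.

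\textbf{The one-variable maximization.} This is the step I expect to need the most care. Differentiating, $f'(p)=r/(1+p(r-1))^2-1$, which vanishes at $1+p(r-1)=\sqrt r$, i.e.\ at $p^*=1/(1+\sqrt r)$. Substituting gives
\[
f(p^*)=\frac{\sqrt r-1}{\sqrt r+1}=\tanh(\varepsilon/2).
\]
Hence $\|\hat{\bm\alpha}-\bm\alpha\|_1\le 2\tanh(\delta/(2\tau))\le 2\tanh(\delta/\tau)$, since $\tanh$ is increasing, which yields the stated inequality.

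The remaining checks are routine: $p^*$ is the global maximum because $f$ is concave, being a linear-fractional term minus a linear one, and the monotonicity of $p\rho/(p\rho+1-p)$ in $\rho$ holds because its numerator and denominator depend on $\rho$ linearly with a positive determinant.
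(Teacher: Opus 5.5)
Your proof is correct. For the $\ell_1$ part it takes a genuinely different and sharper route than the paper.

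The ratio bound is the same argument in substance. The paper bounds $\frac{\hat\alpha_i/\alpha_i}{\hat\alpha_j/\alpha_j}=\exp(\Delta_i-\Delta_j)$ and then uses $\sum_k\alpha_k(\hat\alpha_k/\alpha_k)=1$; you make the normalizer $Z/\hat Z$ explicit instead.

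For the $\ell_1$ bound, the paper argues pointwise. With $R=e^{2\delta/\tau}$ it applies $|r-1|\le\frac{R-1}{R+1}(r+1)$ for $r\in[1/R,R]$ to each $r_k=\hat\alpha_k/\alpha_k$ and sums with weights $\alpha_k$. This gives $2\frac{R-1}{R+1}=2\tanh(\delta/\tau)$ in two lines, directly from the ratio bound. But that interval has log-width $4\delta/\tau$, whereas the ratios actually lie in a band $[c\,e^{-\delta/\tau},c\,e^{\delta/\tau}]$ of log-width $2\delta/\tau$. The per-client factor $e^{d_k}$ and the common factor $Z/\hat Z$ cannot both be at their worst cases independently. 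Your reduction to the mass shift on $A$, the two-block collapse via monotonicity in $\rho$, and the one-variable maximization exploit exactly this coupling. They give $2\tanh(\delta/(2\tau))$.

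This bound is tight. Take $K=2$ and, for $\delta\le 1$, $S_1=\delta$, $S_2=0$, $\hat S_1=0$, $\hat S_2=\delta$. Then $\alpha_1=1/(1+e^{\delta/\tau})$, $\hat\alpha_1=e^{\delta/\tau}/(1+e^{\delta/\tau})$, and $\|\hat{\bm\alpha}-\bm\alpha\|_1=2\tanh(\delta/(2\tau))$. So your route buys the optimal constant, and would sharpen Corollary~\ref{cor:robustness} to $2M\tanh(\delta/(2\tau))$. The paper's route buys brevity at the cost of a factor of two inside the $\tanh$.

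One small fix: ``linear-fractional minus linear'' does not by itself imply concavity. Justify it instead by noting that $f'(p)=r/(1+p(r-1))^2-1$ is nonincreasing on $[0,1]$ because $r\ge 1$. The case $\delta=0$, where solving $1+p(r-1)=\sqrt r$ for $p$ needs $r>1$, is trivial.
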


\begin{proof}
Let $\Delta_k=-(\hat{S}_k-S_k)/\tau$, where $|\Delta_k|\le\delta/\tau$. Then $\hat{\alpha}_k = \alpha_k \exp(\Delta_k) / \sum_j\alpha_j \exp(\Delta_j)$. For any $i,j$, $\frac{\hat{\alpha}_i/\alpha_i}{\hat{\alpha}_j/\alpha_j} = \exp(\Delta_i-\Delta_j)$, and hence $\exp(-2\delta/\tau) \le \frac{\hat{\alpha}_i/\alpha_i}{\hat{\alpha}_j/\alpha_j} \le \exp(2\delta/\tau)$. Since $\sum_k\alpha_k(\hat{\alpha}_k/\alpha_k)=1$, this implies $\exp(-2\delta/\tau) \le \hat{\alpha}_k/\alpha_k \le \exp(2\delta/\tau)$. Let $R=\exp(2\delta/\tau)$. Then $1/R\le \hat{\alpha}_k/\alpha_k\le R$, and for any $r\in[1/R,R]$, we have the inequality $|r-1| \le \frac{R-1}{R+1}(r+1)$. Therefore, $\|\hat{\bm{\alpha}}-\bm{\alpha}\|_1 = \sum_k\alpha_k \left|\frac{\hat{\alpha}_k}{\alpha_k}-1\right| \le \frac{R-1}{R+1} \sum_k\alpha_k\left(\frac{\hat{\alpha}_k}{\alpha_k}+1\right) = 2\frac{R-1}{R+1} = 2\tanh(\delta/\tau)$.
\end{proof}

Theorem~\ref{thm:robustness} provides a bounded sensitivity guarantee for entropy perturbations. Notably, a larger $\tau$ dampens the error transmission. Thus, $\tau$ simultaneously controls structural selectivity and error sensitivity. Rather than using these robust weights $\alpha_k$ solely for an initial scalar aggregation, SPIRE treats them as a shared influence signal that dictates how client information contributes to the subsequent global reconstruction. This persistent propagation ensures that the structural preference is not lost, forming the central integration mechanism of our one-shot framework.

\subsection{Structure-Guided Pseudograph Synthesis}
\label{sec:generation}

Having obtained the entropy-derived weights $\alpha_k$, the
server performs pseudograph synthesis. The same weights are propagated
through semantic initialization, geometric alignment, feature
generation, and pseudograph assembly.

\textbf{Entropy-Weighted Global Initialization.}
The server first constructs two complementary anchors that 
jointly condition the diffusion process: a semantic 
condition $\bm{\mathcal{P}}_{con} = \sum_{k=1}^K \alpha_k \bm{\mathcal{P}}_k$ and a geometric target 
$\mathcal{B}^*$.

Since $\bm{\mathcal{P}}_{con}$ captures only first-order statistics, we optimize a geometric target 
$\mathcal{B}^*$ via the Wasserstein 
Barycenter~\citep{agueh2011barycenters}, using the exact same weights $\alpha_k$ to maintain structural consistency:
\begin{equation}
    \mathcal{B}^* = \mathop{\arg\min}_{\mathcal{B}} 
    \sum_{k=1}^K \alpha_k \cdot 
    \mathcal{L}_{OT}(\mathcal{B}, \bm{\mathcal{P}}_k),
    \label{eq:b*}
\end{equation}
where $\mathcal{L}_{OT}$ is the Sinkhorn optimal transport cost. To characterize how entropy estimation errors propagate to the barycentric objective, we establish the following lemma and corollary.

\begin{lemma}[Barycentric Objective Stability]
\label{lem:stability}
Assume that the admissible barycenters lie in a bounded feasible set $\Omega\subset\mathbb{R}^F$ and that $0\le\mathcal{L}_{OT}(\mathcal{B},\mathcal{P}_k)\le M$ for all admissible $\mathcal{B}$ and clients $k$. Define $V(\bm{\alpha}) = \inf_{\mathcal{B}\in\Omega} \sum_k\alpha_k \mathcal{L}_{OT}(\mathcal{B},\mathcal{P}_k)$. Then $|V(\hat{\bm{\alpha}})-V(\bm{\alpha})| \le M\|\hat{\bm{\alpha}}-\bm{\alpha}\|_1$.
\end{lemma}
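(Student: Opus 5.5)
The plan is the standard argument that an infimum of functions, each affine in $\bm{\alpha}$ with bounded coefficients, is Lipschitz in $\bm{\alpha}$. For each fixed admissible $\mathcal{B}\in\Omega$, write $f_{\mathcal{B}}(\bm{\alpha})=\sum_k\alpha_k\mathcal{L}_{OT}(\mathcal{B},\mathcal{P}_k)$. Then $V(\bm{\alpha})=\inf_{\mathcal{B}\in\Omega}f_{\mathcal{B}}(\bm{\alpha})$. By the bound $0\le\mathcal{L}_{OT}\le M$, and with both weight vectors in $\Delta^{K-1}$ (in our setting they come from Eq.~\eqref{eq:alpha_k}), every $f_{\mathcal{B}}(\bm{\alpha})$ lies in $[0,M]$. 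Hence $V$ is finite on the simplex and the infima are well defined.

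The first step is a uniform pointwise estimate. For every $\mathcal{B}\in\Omega$, Hölder's inequality ($\ell_1$ against $\ell_\infty$) gives
\begin{equation}
|f_{\mathcal{B}}(\hat{\bm{\alpha}})-f_{\mathcal{B}}(\bm{\alpha})|=\Bigl|\sum_k(\hat{\alpha}_k-\alpha_k)\mathcal{L}_{OT}(\mathcal{B},\mathcal{P}_k)\Bigr|\le \max_k|\mathcal{L}_{OT}(\mathcal{B},\mathcal{P}_k)|\,\|\hat{\bm{\alpha}}-\bm{\alpha}\|_1\le M\|\hat{\bm{\alpha}}-\bm{\alpha}\|_1 .\nonumber
\end{equation}
This bound is uniform in $\mathcal{B}$, and that uniformity is what drives the rest of the proof.

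The second step passes the estimate through the infimum, without assuming a minimizer exists. Fix $\varepsilon>0$ and pick $\mathcal{B}_\varepsilon\in\Omega$ with $f_{\mathcal{B}_\varepsilon}(\bm{\alpha})\le V(\bm{\alpha})+\varepsilon$. Then
\begin{equation}
V(\hat{\bm{\alpha}})\le f_{\mathcal{B}_\varepsilon}(\hat{\bm{\alpha}})\le f_{\mathcal{B}_\varepsilon}(\bm{\alpha})+M\|\hat{\bm{\alpha}}-\bm{\alpha}\|_1\le V(\bm{\alpha})+\varepsilon+M\|\hat{\bm{\alpha}}-\bm{\alpha}\|_1 .\nonumber
\end{equation}
Letting $\varepsilon\to0$ gives $V(\hat{\bm{\alpha}})-V(\bm{\alpha})\le M\|\hat{\bm{\alpha}}-\bm{\alpha}\|_1$. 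Swapping the roles of $\bm{\alpha}$ and $\hat{\bm{\alpha}}$ gives the reverse inequality, and together they yield the absolute-value bound in Lemma~\ref{lem:stability}.

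There is no real obstacle here. The only delicate point is that $\Omega$ is bounded but not assumed closed, and $\mathcal{L}_{OT}$ is not assumed continuous, so a minimizer may not be attained; the $\varepsilon$-argument handles this. The boundedness of $\Omega$ itself is not needed beyond guaranteeing that the uniform bound $M$ applies.

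As a remark to place after the proof, combining the lemma with Theorem~\ref{thm:robustness} gives $|V(\hat{\bm{\alpha}})-V(\bm{\alpha})|\le 2M\tanh(\delta/\tau)$. This is presumably the corollary that follows.
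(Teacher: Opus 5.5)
Your proposal is correct and follows the paper's proof: the same bound $\bigl|\sum_k(\hat{\alpha}_k-\alpha_k)\mathcal{L}_{OT}(\mathcal{B},\mathcal{P}_k)\bigr|\le M\|\hat{\bm{\alpha}}-\bm{\alpha}\|_1$, uniform in $\mathcal{B}$, is passed through the infimum and then symmetrized by swapping $\bm{\alpha}$ and $\hat{\bm{\alpha}}$. Your $\varepsilon$-near-minimizer step spells out the passage through the infimum, which the paper asserts in a single line, and it correctly avoids assuming a minimizer exists.
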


\begin{proof}
Let $F(\mathcal{B}; \bm{\alpha}) = \sum_k \alpha_k \mathcal{L}_{OT}(\mathcal{B}, \bm{\mathcal{P}}_k)$. For any fixed admissible $\mathcal{B}$, $|F(\mathcal{B}; \hat{\bm{\alpha}}) - F(\mathcal{B}; \bm{\alpha})| = \left| \sum_k (\hat{\alpha}_k - \alpha_k) \mathcal{L}_{OT}(\mathcal{B}, \bm{\mathcal{P}}_k) \right| \le M \|\hat{\bm{\alpha}} - \bm{\alpha}\|_1$. Since the bound holds uniformly over $\mathcal{B}$, $V(\hat{\bm{\alpha}}) \le V(\bm{\alpha}) + M\|\hat{\bm{\alpha}}-\bm{\alpha}\|_1$. Swapping $\bm{\alpha}$ and $\hat{\bm{\alpha}}$ gives the reverse inequality, yielding the stated absolute bound.
\end{proof}

\begin{corollary}[Barycentric Robustness to Entropy Perturbations]
\label{cor:robustness}
Combining Theorem~\ref{thm:robustness} and Lemma~\ref{lem:stability}, the initialization objective deviation under bounded entropy perturbation $\delta$ is bounded by $|V(\hat{\bm{\alpha}}) - V(\bm{\alpha})| \le 2M \tanh(\delta/\tau)$.
\end{corollary}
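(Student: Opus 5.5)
The plan is to chain the two earlier results directly. Lemma~\ref{lem:stability} is deterministic and holds for any pair of weight vectors in the simplex. Theorem~\ref{thm:robustness} controls the specific pair produced by the softmax map of Eq.~\eqref{eq:alpha_k}.

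First, I will fix the perturbed statistics $\{\hat S_k\}$ with $|\hat S_k - S_k|\le\delta$ for all $k$. I will then form $\bm{\alpha}$ and $\hat{\bm{\alpha}}$ through Eq.~\eqref{eq:alpha_k} with the same temperature $\tau>0$. Both vectors lie in $\Delta^{K-1}$, and the barycenter problem is posed over the same bounded feasible set $\Omega$ with the same $\mathcal{L}_{OT}$ bound $M$. Hence the hypotheses of Lemma~\ref{lem:stability} hold for this pair, and it gives
\begin{equation}
|V(\hat{\bm{\alpha}})-V(\bm{\alpha})|\le M\|\hat{\bm{\alpha}}-\bm{\alpha}\|_1. \nonumber
\end{equation}

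Second, I will invoke the $\ell_1$ part of Theorem~\ref{thm:robustness}, which gives $\|\hat{\bm{\alpha}}-\bm{\alpha}\|_1\le 2\tanh(\delta/\tau)$. Since $M\ge 0$, multiplying this inequality by $M$ preserves its direction, and substituting into the previous display yields $|V(\hat{\bm{\alpha}})-V(\bm{\alpha})|\le 2M\tanh(\delta/\tau)$, as claimed.

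There is essentially no hard step. The only point needing care is that $M$ must bound $\mathcal{L}_{OT}(\mathcal{B},\mathcal{P}_k)$ uniformly over $\Omega$ and over all clients, independently of the weights. Otherwise the constant in Lemma~\ref{lem:stability} could change between $\bm{\alpha}$ and $\hat{\bm{\alpha}}$. I will also note that the bound is stated for the optimal objective value $V$, not for the minimizer $\mathcal{B}^*$. Controlling the minimizer itself would need additional strong convexity of the Sinkhorn cost, and the corollary does not claim it. As a sanity check, the bound tends to $0$ as $\delta\to0$, and it never exceeds $2M$, which is the trivial bound since $\|\hat{\bm{\alpha}}-\bm{\alpha}\|_1\le 2$ on the simplex.
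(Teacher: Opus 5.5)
Your proposal is correct and is exactly the paper's argument: the corollary comes from substituting the $\ell_1$ bound $\|\hat{\bm{\alpha}}-\bm{\alpha}\|_1\le 2\tanh(\delta/\tau)$ of Theorem~\ref{thm:robustness} into the estimate $|V(\hat{\bm{\alpha}})-V(\bm{\alpha})|\le M\|\hat{\bm{\alpha}}-\bm{\alpha}\|_1$ of Lemma~\ref{lem:stability}. One small correction to your closing sanity check: both values of $V$ lie in $[0,M]$, so the trivial bound on the deviation is $M$ rather than $2M$, but this does not affect the proof.
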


Corollary~\ref{cor:robustness} bounds the propagation of entropy perturbations through client weighting to the barycentric objective. To prevent early-stage geometric drift in practice, $\mathcal{B}^*$ is warm-started via static WB alignment and held fixed for the first quarter of training; subsequently, $\mathcal{B}^*$ and the generator $\theta$ are jointly optimized.

\begin{algorithm}[h]
\caption{The Main Steps of SPIRE}
\label{alg:spire}
\begin{algorithmic}[1]
\renewcommand{\algorithmicrequire}{\textbf{Input:}}
\renewcommand{\algorithmicensure}{\textbf{Output:}}

\Require Local datasets $\{\mathcal{G}_k\}_{k=1}^K$; Server epochs $T_G$, Gen steps $K_{gen}$; Hyperparameters $\tau, \lambda_{align}, \lambda_{se}$.
\Ensure Global GNN parameters $\mathbf{W}_{global}$.

% --- 使用 vspace 撑开区块间距，创造视觉呼吸感 ---
\vspace{0.12cm}
\Statex \textbf{[Client Side]}
\For{client $k = 1, \dots, K$ \textbf{in parallel}}
    \State Compute $S_k$ (Eq.~\eqref{eq:se}) and $\bm{\mathcal{P}}_k$ (Eq.~\eqref{eq:proto})
    \State Upload $\{S_k, \bm{\mathcal{P}}_k, N_k\}$ to server.
\EndFor

\vspace{0.12cm}
\Statex \textbf{[Server Side]}
\State Compute weights $\alpha_k$ (Eq.~\eqref{eq:alpha_k}), and construct condition $\bm{\mathcal{P}}_{con}$ 
\State Init $\mathcal{B}^*$, generator $\theta$, and $\mathbf{W}_{global}$

\vspace{0.08cm}
\For{$s = 1$ \textbf{to} $T_G$}
    \vspace{0.06cm}
    \State \textit{\# Phase 1: Target Refinement \& Generator Optimization}
    \State Update weight $\gamma(s)$ (Eq.~\eqref{eq:lambda_schedule})
    \State Refine geometric target $\mathcal{B}^*$ (Eq.~\eqref{eq:b*})
    \For{$step = 1$ \textbf{to} $K_{gen}$} 
        \State Sample $\mathbf{Z} \sim \mathcal{N}(\mathbf{0}, \mathbf{I})$
        \State Synthesize $\hat{\mathbf{X}}$ and $\hat{\mathbf{A}}_{soft}$ (Eq.~\eqref{eq:a_soft}) via Generator $\theta$
        \State Update $\theta$ by minimizing Joint Loss $\mathcal{L}$ (Eq.~\eqref{eq:total_loss})
    \EndFor
    
    \vspace{0.1cm}
    \State \textit{\# Phase 2: Pseudograph Synthesis \& Global Training}
    \State Initialize pseudograph memory $\mathcal{G}_{pseudo} = \emptyset$.
    \For{$k = 1$ \textbf{to} $K$}
        \State Determine generation budget $N_{gen}^{(k)}$ (Eq.~\eqref{eq:budget})
        \State Generate jittered features $\hat{\mathbf{X}}_{final}^{(k)}$ 
        \State Build KNN topology $\mathbf{A}^{(k)}$ to expand $\mathcal{G}_{pseudo}$
    \EndFor
    \State Train global GNN $\mathbf{W}_{global}$ via Cross-Entropy loss on $\mathcal{G}_{pseudo}$
    \vspace{0.06cm}
\EndFor

\vspace{0.12cm}
\State \Return $\mathbf{W}_{global}$
\end{algorithmic}
\end{algorithm}

\begin{table*}[t]
\centering
\caption{Performance comparison on node classification tasks. The abbreviations in the first column denote: \textbf{BL} (Baseline), and \textbf{OS} (One-shot Federated Learning). The best results are highlighted in \textbf{bold}, and the second-best results are \underline{underlined}. The small numbers indicate the performance gap compared to FedAvg (\textcolor{mygreen}{$\uparrow$} improvement, \textcolor{myred}{$\downarrow$} degradation).}
\label{tab:main_results}

\resizebox{\textwidth}{!}{%
\begin{tabular}{c|l||ccccccc}
\toprule
\textbf{Type} & \textbf{Methods} & \textbf{Cora} & \textbf{CiteSeer} & \textbf{PubMed} & \textbf{Amz-Comp} & \textbf{WikiCS} & \textbf{Coauthor-CS} & \textbf{ogbn-arxiv} \\ 
\midrule
% 基准 BL
BL & FedAvg \myconf{[AISTATS17]} & 34.74 & 36.08 & 61.92 & 37.70 & 19.47 & 25.33 & 14.58 \\ 
\midrule
% Traditional FL
\multirow{3}{*}{FL} 
& FedProx \myconf{[MLSys20]} & 28.24\down{6.50} & 30.48\down{5.60} & 54.62\down{7.30} & 24.74\down{12.96} & 4.22\down{15.25} & 23.33\down{2.00} & 13.37\down{1.21} \\
& FedOPT \myconf{[ICLR21]} & 29.68\down{5.06} & 21.82\down{14.26} & 43.86\down{18.06} & 12.42\down{25.28} & 10.57\down{8.90} & 7.76\down{17.57} & 2.14\down{12.44} \\
& MOON \myconf{[CVPR21]} & 27.98\down{6.76} & 22.12\down{13.96} & 45.50\down{16.42} & 23.71\down{13.99} & 4.13\down{15.34} & 11.44\down{13.89} & 11.57\down{3.01} \\ 
\midrule
% Traditional FGL
\multirow{4}{*}{FGL} 
& FedProto \myconf{[AAAI22]} & 30.50\down{4.24} & 24.56\down{11.52} & 52.86\down{9.06} & 37.66\down{0.04} & 6.67\down{12.80} & 26.51\up{1.18} & 5.20\down{9.38} \\
& FedPub \myconf{[ICML23]} & 21.16\down{13.58} & 19.16\down{16.92} & 41.16\down{20.76} & 37.70\down{0.00} & 17.49\down{1.98} & 8.51\down{16.82} & 10.16\down{4.42} \\
& FedGTA \myconf{[VLDB24]} & 27.06\down{7.68} & 20.92\down{15.16} & 42.98\down{18.94} & \second{38.22}\up{0.52} & 6.63\down{12.84} & 7.65\down{17.68} & 1.77\down{12.81} \\ 
& FedTAD \myconf{[IJCAI24]} & 22.92\down{11.82} & 22.48\down{13.60} & 46.96\down{14.96} & 31.71\down{5.99} & 16.25\down{3.22} & 4.62\down{20.71} & 1.27\down{13.31} \\
\midrule
% One-shot FL
\multirow{6}{*}{OSFL} 
& DENSE \myconf{[NeurIPS22]} & 25.10\down{9.64} & 18.36\down{17.72} & 41.70\down{20.22} & 14.94\down{22.76} & \second{17.83}\down{1.64} & 16.22\down{9.11} & 5.86\down{8.72} \\
& FedCVAE \myconf{[ICLR23]} & 14.78\down{19.96} & 17.17\down{18.91} & 33.68\down{28.24} & 10.86\down{26.84} & 17.36\down{2.11} & 11.97\down{13.36} & 11.52\down{3.06} \\
& FedSD2C \myconf{[NeurIPS24]} & 21.68\down{13.06} & 18.66\down{17.42} & 32.90\down{29.02} & 12.94\down{24.76} & 12.18\down{7.29} & 10.03\down{15.30} & 5.87\down{8.71} \\ 
& GHOST \myconf{[ICML25]} & 38.19\up{3.45} & 38.95\up{2.87} & 59.03\down{2.89} & 36.59\down{1.11} & 11.95\down{7.52} & 28.31\up{2.98} & \second{15.53}\up{0.95} \\
& OASIS \myconf{[NeurIPS25]} & \second{41.66}\up{6.92} & \second{41.29}\up{5.21} & \second{62.48}\up{0.56} & 36.83\down{0.87} & 12.46\down{7.01} & \second{28.70}\up{3.37} & OOM \\
\cmidrule{2-9}
% Ours 行高亮背景色
& \cellcolor{lightyellow}\textbf{SPIRE (Ours)} & \cellcolor{lightyellow}\best{64.06}\up{29.32} & \cellcolor{lightyellow}\best{55.32}\up{19.24} & \cellcolor{lightyellow}\best{73.40}\up{11.48} & \cellcolor{lightyellow}\best{38.32}\up{0.62} & \cellcolor{lightyellow}\best{21.06}\up{1.59} & \cellcolor{lightyellow}\best{29.65}\up{4.32} & \cellcolor{lightyellow}\best{21.04}\up{6.46} \\ 
\bottomrule
\end{tabular}%
}
\end{table*}

\textbf{Conditional Feature Generation.}
With $\bm{\mathcal{P}}_{con}$ and $\mathcal{B}^*$ 
established as dual anchors, the server employs a 
conditional diffusion model~\citep{ho2020ddpm} to 
generate the node feature matrix $\hat{\mathbf{X}}$. The 
reverse process iteratively denoises 
$\mathbf{X}_T \sim \mathcal{N}(\mathbf{0}, \mathbf{I})$ 
conditioned on $\bm{\mathcal{P}}_{con}$:
\begin{equation}
    \mathbf{X}_{t-1} = \frac{1}{\sqrt{\rho_t}} \left( 
    \mathbf{X}_t - \frac{1-\rho_t}{\sqrt{1-\bar{\rho}_t}} 
    \boldsymbol{\epsilon}_\theta(\mathbf{X}_t, t, 
    \bm{\mathcal{P}}_{con}) \right) + \sigma_t \mathbf{Z},
    \label{eq:gfm}
\end{equation}
where $\rho_t$ is the signal retention schedule and $\boldsymbol{\epsilon}_\theta$ is the learnable noise prediction network. 

This feature-first paradigm separates feature synthesis
from direct edge reconstruction. By generating structure-aware features conditioned on entropy-weighted prototypes and constraining their distribution, the resulting topology instantiated through KNN is determined by the generated feature geometry rather than inherited from noisy local topologies. We optimize the generator via a dual objective:
\begin{equation}
    \mathcal{L} = \lambda_{align} \mathcal{L}_{align} + \gamma(s) \mathcal{L}_{SE}.
    \label{eq:total_loss}
\end{equation}

The first term $\mathcal{L}_{align}$ closes the loop 
established by Eq.~\eqref{eq:b*}: it minimizes the 
discrepancy between synthesized class centroids $\bar{\mathbf{x}}_c$ and the 
corresponding geometric target $\mathbf{b}^*_c$:
\begin{equation}
    \mathcal{L}_{align} = \frac{1}{|\mathcal{C}_B|} 
    \sum_{c \in \mathcal{C}_B} \left\| \bar{\mathbf{x}}_c 
    - \mathbf{b}^*_c \right\|_2^2.
    \label{eq:l_align}
\end{equation}

The second term $\mathcal{L}_{SE}$ introduces a structural
regularization consistent with the entropy-based weighting scheme. It
encourages the generated topology to follow the same degree-level
structural preference utilized for client weighting. We construct a differentiable soft adjacency matrix:
\begin{equation}
    \hat{\mathbf{A}}_{soft} = \text{ReLU}\left( 
    \tilde{\mathbf{X}} \tilde{\mathbf{X}}^\top \right),
    \label{eq:a_soft}
\end{equation}
where $\tilde{\mathbf{X}}$ is the $\ell_2$-normalized $\hat{\mathbf{X}}$. $\mathcal{L}_{SE}$ is then 
computed via Eq.~\eqref{eq:se_def} on $\hat{\mathbf{A}}_{soft}$, with the node degree explicitly defined as the row sum of the continuous edge weights: $d_v = \sum_u \hat{\mathbf{A}}_{vu}$.

To avoid structural instability before generated 
features converge toward class-discriminative 
regions, we introduce a cosine warm-up schedule 
that gradually activates the constraint:
\begin{equation}
    \gamma(s) = 
    \begin{cases} 
    \frac{\lambda_{se}}{2} \left( 1 - \cos \left( 
    \frac{s \pi}{T_{warm}} \right) \right), 
    & s < T_{warm} \\
    \lambda_{se}, & \text{otherwise}
    \end{cases}
    \label{eq:lambda_schedule}
\end{equation}
where $s$ is the current training step.

\textbf{Structure-Aware Pseudograph Assembly.}
Synthesized features are assembled into the global pseudograph $\mathcal{G}_{pseudo}$. Within a base interval
$[N_{min}^{(k)}, N_{max}^{(k)}]$, $\alpha_k$ strictly determines the generated
sample count:
\begin{equation}
\label{eq:budget}
    N_{gen}^{(k)} = \left\lfloor N_{min}^{(k)} +
    \varsigma \cdot \alpha_k \cdot (N_{max}^{(k)} -
    N_{min}^{(k)}) \right\rceil.
\end{equation}
We inject controlled Gaussian noise
$\boldsymbol{\xi} \sim \mathcal{N}(\mathbf{0}, \sigma^2\mathbf{I})$
into the synthesized features to reduce deterministic
overfitting: $\hat{\mathbf{X}}_{final} = \hat{\mathbf{X}} + \boldsymbol{\xi}$.
Topology is then reconstructed for each
$\hat{\mathbf{X}}_{final}^{(k)}$ via KNN~\citep{cover1967nearest}. Client-wise subgraphs are merged
via disjoint union, with edge weights permanently scaled by $\alpha_k$.

\textbf{Complexity Analysis.}
On the client side, computing structural entropy and class 
prototypes requires a single forward pass over 
the local graph, costing 
$\mathcal{O}(|\mathcal{V}_k|F + |\mathcal{E}_k|)$ 
with zero local backpropagation, where 
$|\mathcal{V}_k|$ and $|\mathcal{E}_k|$ denote 
the node and edge counts of client $k$, and $F$ 
is the feature dimension. Communication overhead 
is strictly $\mathcal{O}(CF + 1)$ per client, 
where $C$ is the number of classes. On the server 
side, Wasserstein Barycenter optimization via the 
Sinkhorn algorithm costs $\mathcal{O}(IKC^2)$, 
where $I$ is the number of Sinkhorn iterations 
and $K$ is the number of clients; diffusion-based 
feature synthesis over $N$ total nodes with $T$ 
denoising steps and hidden dimension $H$ costs 
$\mathcal{O}(TNH^2)$; and KNN topology 
reconstruction costs $\mathcal{O}(N^2F)$. The 
overall complexity is 
$\mathcal{O}(IKC^2 + TNH^2 + N^2F)$, where the 
dominant server-side terms formally confirm the 
intended asymmetric design.

Algorithm~\ref{alg:spire} summarizes the complete 
procedure of SPIRE, reflecting the two-sided design 
established in Sections~\ref{sec:quality} 
and~\ref{sec:generation}.

\begin{table*}[t]
\centering
\caption{Efficiency and Communication Overhead Comparison. Each cell presents \textbf{Comm. Time / Comm. Cost / Total Training Time}. $T_{train}$ for SPIRE includes server-side diffusion training, while others denote cumulative training time.}
\label{tab:efficiency}
\resizebox{\textwidth}{!}{%
\begin{tabular}{l||c|c|c|c}
\toprule
\best{Methods} & \best{Cora} & \best{CiteSeer} & \best{PubMed} & \best{ogbn-arxiv} \\ 
\midrule
FedAvg \myconf{(200 rounds)} & 65.71 s / 2814 MB / 66 s & 67.33 s / 7246 MB / 68 s & 66.45 s / 984 MB / 67 s & 79.17 s / 330 MB / 80 s \\ 
GHOST \myconf{[ICML25]}      & 40.36 s / 16.0 MB / 71 s & 60.00 s / 38.0 MB / 102 s & 5471 s / 7.0 MB / 5716 s & 18106 s / 3.0 MB / 20028 s \\ 
OASIS \myconf{[NeurIPS25]}   & 172 s / 44.0 MB / 179 s  & 179 s / 111 MB / 189 s   & 6971 s / 17.0 MB / 7004 s & \conf{OOM} \\ 
\midrule
\rowcolor{lightyellow} 
\best{SPIRE (Ours)} & \best{<0.01 s / 0.38 MB} / 1265 s & \best{<0.01 s / 0.85 MB} / 1227 s & \best{<0.01 s / 0.06 MB} / 986 s & \best{0.11 s / 0.20 MB} / 653 s \\ 
\bottomrule
\end{tabular}%
}
\end{table*}

\section{Experiments}
\subsection{Experimental Setup}

\textbf{Datasets.}
We evaluate SPIRE on seven node classification 
benchmarks covering diverse domains and scales: 
Cora~\citep{mccallum2000automating}, 
CiteSeer~\citep{giles1998citeseer}, 
PubMed~\citep{sen2008collective}, 
Amazon-Computers~\citep{shchur2018pitfalls}, 
WikiCS~\citep{mernyei2020wikics}, 
Coauthor-CS~\citep{shchur2018pitfalls}, and 
ogbn-arxiv~\citep{hu2020open}. For Cora, 
CiteSeer, and PubMed, we adopt the standard 
Planetoid split (20 nodes per class for training, 
500 validation, 1,000 test). For ogbn-arxiv and 
WikiCS, we follow their official benchmark splits. 
For Computers and Coauthor-CS, we apply random 
partitions at 20\%/40\%/40\%.

\textbf{Baselines.}
We compare against four conventional FL methods: 
FedAvg [AISTATS17] \citep{mcmahan2017communication}, 
FedProx {[MLSys20] \citep{li2020fedprox}, 
FedOPT [ICLR21] \citep{reddi2021adaptive}, 
and MOON [CVPR21] \citep{li2021moon}; 
four multi-round FGL methods: 
FedProto [AAAI22] \citep{tan2022fedproto}, 
FedPub [ICML23] \citep{baek2023personalized}, 
FedGTA [VLDB23] \citep{li2023fedgta}, 
and FedTAD [IJCAI24] \citep{zhu2024fedtad}; 
and five one-shot FL/FGL methods: 
DENSE [NeurIPS22] \citep{zhang2022dense}, 
FedCVAE [ICLR23] \citep{heinbaugh2023data}, 
FedSD2C [NeurIPS24] \citep{zhang2024one}, 
GHOST [ICML25] \citep{qian2025ghost}, 
and OASIS [NeurIPS25] \citep{wanoasis}.

\textbf{Implementation Details.}
All experiments are conducted on an NVIDIA GeForce RTX 4090 GPU with PyTorch 2.4.1 and PyTorch Geometric 2.6.1. For strict and fair comparison, all federated baselines and SPIRE share the identical backbone classifier: a two-layer GCN with hidden dimension 256, dropout ratio 0.3, and ReLU activation, evaluated under identical transductive splits. For SPIRE, the MLP-based conditional diffusion generator uses the same hidden dimension (256), $T=10$ denoising steps, and AdaLN-based condition injection. Data is partitioned among $K=10$ clients via a Dirichlet distribution $\text{Dir}(\beta=0.05)$ to simulate extreme non-IID conditions. All modules are optimized using Adam (weight decay $5\times10^{-4}$). The global model training steps per round is fixed to $T_G=30$ with a server learning rate of $server\_lr=0.005$, while the learning rate for client local models (for multi-round baselines) is 0.01, for the diffusion generator is 0.001, and for the Wasserstein Barycenter learner is $5\times10^{-4}$. The server trains for up to 300 epochs (with reduced epochs and increased jittering on ogbn-arxiv). Key hyperparameters include $\tau=1.0$, $\lambda_{se} \in \{10^{-5},10^{-3},10^{-1}\}$, $\lambda_{align} \in \{0.1, 0.5, 1.0, 5.0, 10.0\}$, and $gen\_knn \in \{6,8,10,12\}$, all selected via grid search. The Wasserstein Barycenter is optimized via the Sinkhorn algorithm with 3 iterations. All reported results are averaged over five independent runs with different random seeds. Code is available at \url{https://github.com/Yodeesy/SPIRE}.

\subsection{Performance Comparison}
\textbf{General Classification Performance.} 
Table~\ref{tab:main_results} reports node classification accuracy under $K=10$, $\beta=0.05$.

SPIRE achieves the strongest performance across all seven datasets, consistently outperforming the one-shot baselines. On Cora, it surpasses OASIS by 22.40\% and GHOST by 25.87\%; on ogbn-arxiv, it exceeds GHOST by 5.51\%, while OASIS runs out of memory. The gains reflect the combined effect of topology-aware client weighting and structure-guided pseudograph generation, which are not explicitly modeled by existing one-shot baselines.

Multi-round FGL methods degrade substantially when restricted to a single communication round. FedGTA scores 27.06\% on Cora, below FedAvg's 34.74\%, and FedTAD drops to 22.92\%. These results highlight the difficulty of transferring methods that rely on iterative cross-client alignment to the strict one-shot setting.

Among one-shot baselines, two limitations are evident. FedCVAE and FedSD2C do not explicitly model graph topology during knowledge transfer. Conversely, OASIS incorporates topology through a synthetic graph construction process but runs out of memory on ogbn-arxiv, exposing a scalability limitation when topology-aware synthesis is applied to a large graph. SPIRE combines topology-aware client weighting with server-side pseudograph synthesis, avoiding these two limitations in the evaluated settings.

\textbf{Communication Efficiency and Scalability.}
SPIRE explicitly decouples client representation from global topology generation. By eliminating iterative backpropagation and requiring only $\mathcal{O}(CF+1)$, client-side communication remains under 0.01s on Cora, CiteSeer, and PubMed (e.g., 0.38 MB on Cora). Even on the large ogbn-arxiv graph, SPIRE requires only 0.11s and 0.20 MB per client, maintaining the one-shot constraint.

Total training time exhibits a pragmatic computation--utility trade-off. On Cora, SPIRE requires 1265s, primarily driven by server-side synthesis. To alleviate local data scarcity, the dynamically allocated generation budget $N_{gen}^{(k)}$ (Eq.~\ref{eq:budget}) provides additional synthesized samples. The resulting additional server-side computation accompanies the substantially higher accuracy observed on Cora. Although pairwise feature-similarity computation introduces an $\mathcal{O}(N^2F)$ worst-case cost, deliberately shifting this computationally intensive reconstruction to the server holds profound practical significance. In real-world web applications, this asymmetric design is particularly suitable for cross-silo deployments, where server-side compute is available but repeated communication is costly.

On ogbn-arxiv, methods relying on direct topology-aware construction face severe bottlenecks: OASIS runs out of memory, and GHOST requires over 18,000s. In contrast, SPIRE finishes in just 653s. Furthermore, Although FedAvg has low measured runtime in the simulation (66s on Cora), its 200 rounds require substantially more communication exchanges in bandwidth-constrained deployments. Overall, SPIRE achieves a favorable balance between communication cost, server compute, and graph-mining utility.

\subsection{Robustness Analysis}

\begin{figure}[t]
	\centering
	% === 左图：Cora ===
	\begin{minipage}[b]{0.48\linewidth}
		\centering
		\includegraphics[width=\linewidth]{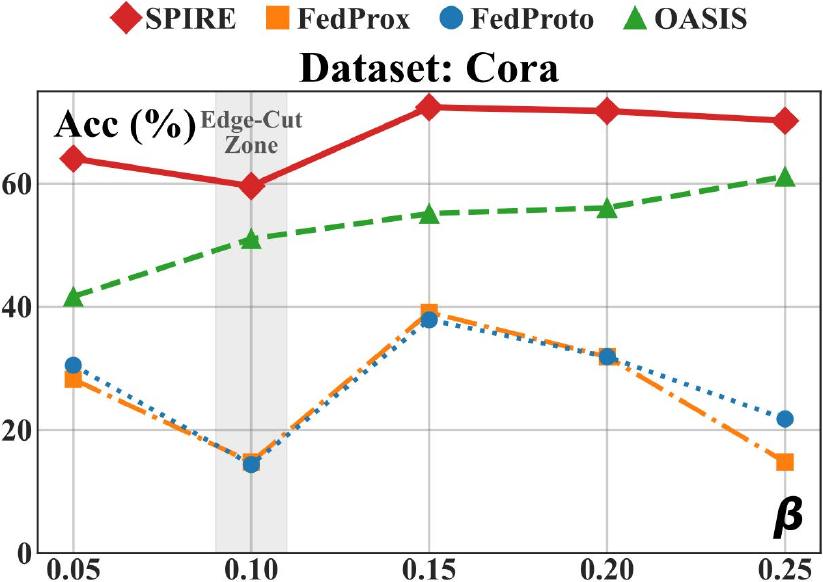}
		\centerline{\small (a) Cora}
	\end{minipage}
	\hfill % 左右推开
	% === 右图：CiteSeer ===
	\begin{minipage}[b]{0.48\linewidth}
		\centering
		\includegraphics[width=\linewidth]{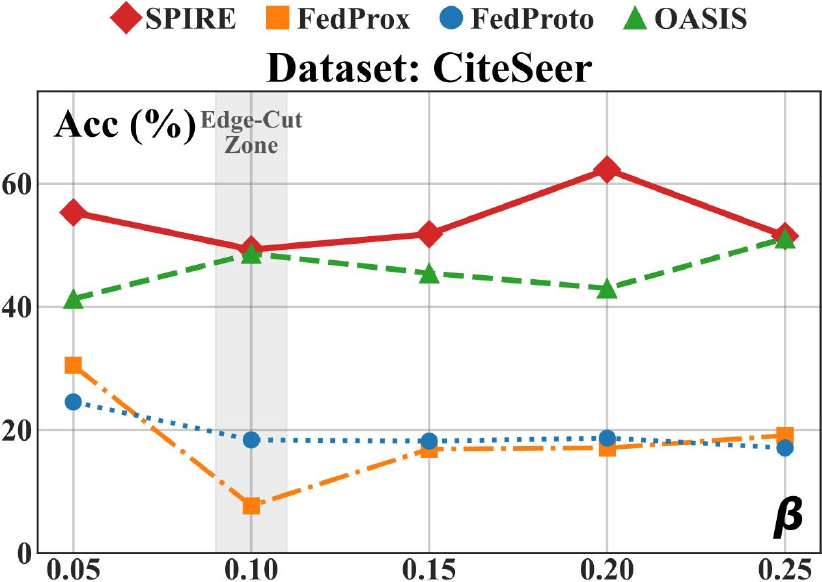}
		\centerline{\small (b) CiteSeer}
	\end{minipage}
    \caption{Impact of data heterogeneity ($\beta \in [0.05, 0.25]$) on datasets Cora and Citeseer.}
	\label{fig:heterogeneity}
    \vspace{-0.2cm}
\end{figure}

\textbf{Impact of Data Heterogeneity.}

Figure~\ref{fig:heterogeneity} tracks accuracy 
across $\beta \in [0.05, 0.25]$. As heterogeneity 
intensifies toward $\beta=0.05$, baselines exhibit 
substantial accuracy drops while SPIRE remains 
stable and widens its lead. This behavior is consistent with the topology-aware weighting in SPIRE: the aggregation weights depend on each client's degree-mass distribution rather than solely on its data volume, allowing structurally different clients to contribute unequally under severe label skew. In 
contrast, conventional aggregation strategies tend 
to propagate these skewed local biases uniformly, 
leading to degraded global representations under 
extreme non-IID settings.

\textbf{Resilience to Graph Perturbations.}
\begin{table}[!t]
\centering
\caption{Robustness analysis under edge/feature perturbation ($\rho$/$\eta$=$0.5$) on representative datasets of varying scales. The small numbers indicate the performance deviation compared to the clean setting (\textcolor{mygreen}{$\uparrow$} improvement, \textcolor{myred}{$\downarrow$} degradation).}
\label{tab:perturbation}
\vspace{2pt}
\resizebox{\linewidth}{!}{%
\begin{tabular}{l||cccc}
\toprule
\textbf{Methods} & \textbf{Cora} & \textbf{CiteSeer} & \textbf{PubMed} & \textbf{ogbn-arxiv} \\ 
\midrule
\multicolumn{5}{c}{\textbf{(a) Edge Perturbation ($\rho=0.5$)}} \\
\midrule
FedAvg & 31.90\down{2.84} & 18.10\down{17.98} & 42.40\down{19.52} & 22.28\up{7.70} \\
DENSE & 16.40\down{8.70} & 18.20\down{0.16} & 41.60\down{0.10} & 3.48\down{2.38} \\
FedCVAE & 15.10\up{0.32} & 17.20\up{0.03} & 41.00\up{7.32} & 21.56\up{10.04} \\
FedSD2C & 14.90\down{6.78} & 16.90\down{1.76} & 41.20\up{8.30} & 7.87\up{2.00} \\
GHOST & 32.75\down{5.44} & 40.60\up{1.65} & 48.94\down{10.09} & 17.54\up{2.01} \\
OASIS & 47.57\up{5.91} & 46.14\up{4.85} & 50.21\down{12.27} & OOM \\
\midrule
\rowcolor{lightyellow} 
\textbf{SPIRE} & \textbf{62.10}\down{1.96} & \textbf{58.20}\up{2.88} & \textbf{70.60}\down{2.80} & \textbf{26.65}\up{5.61} \\ 
\midrule
\multicolumn{5}{c}{\textbf{(b) Feature Perturbation ($\eta=0.5$)}} \\
\midrule
FedAvg & 31.90\down{2.84} & 18.20\down{17.88} & 48.30\down{13.62} & 10.22\down{4.36} \\
DENSE & 14.90\down{10.20} & 18.20\down{0.16} & 40.70\down{1.00} & 3.78\down{2.08} \\
FedCVAE & 31.90\up{17.12} & 13.80\down{3.37} & 40.80\up{7.12} & 12.55\up{1.03} \\
FedSD2C & 31.85\up{10.17} & 16.84\down{1.82} & 40.70\up{7.80} & 18.87\up{13.00} \\
GHOST & 33.46\down{4.73} & 36.85\down{2.10} & 47.11\down{11.92} & 12.79\down{2.74} \\
OASIS & 41.34\down{0.32} & 39.10\down{2.19} & 49.25\down{13.23} & OOM \\
\midrule
\rowcolor{lightyellow} 
\textbf{SPIRE} & \textbf{64.10}\up{0.04} & \textbf{51.05}\down{4.27} & \textbf{68.80}\down{4.60} & \textbf{21.87}\up{0.83} \\ 
\bottomrule
\end{tabular}%
}
\end{table}

As shown in Tab.~\ref{tab:perturbation}, we report the complete accuracy under edge perturbation ($\rho=0.5$) and feature perturbation ($\eta=0.5$) across several datasets. 
Most baselines suffer sharp declines: GHOST drops 
5.44\% on Cora under edge perturbation, and FedAvg 
loses 17.98\% on CiteSeer. SPIRE incurs only a 
1.96\% drop on Cora under edge perturbation, and 
on Cora under feature perturbation its accuracy is 
essentially unchanged at 64.10\%. This stability 
follows from the generation process itself: node features are synthesized from entropy-weighted prototypes rather than copied from potentially corrupted local graphs, and the $\mathcal{L}_{SE}$ term encourages the generated graph to follow the same degree-level structural preference used for client weighting, rather than directly reproducing the perturbed local topology.

\textbf{Scalability to Large Client Numbers.}
Table~\ref{tab:scalability} reports accuracy under 
intensified data fragmentation ($K \in \{5, 10, 
20, 50, 100\}$, $\beta=0.05$). As $K$ increases, 
each client receives a smaller and more skewed data 
shard, posing a fundamental challenge for 
pseudograph synthesis. OASIS runs out of memory on 
ogbn-arxiv across all $K$ settings, while GHOST 
and FedAvg exhibit consistent accuracy degradation 
beyond $K=20$. SPIRE maintains strong performance 
throughout: on Cora it achieves 60.35\% and 
58.58\% at $K=50$ and $K=100$ respectively, 
substantially above OASIS's 40.59\%/37.59\% and 
GHOST's 37.10\%/35.33\%. This stability is consistent with the entropy-based weighting mechanism: as the number of clients increases, SPIRE continues to differentiate client influence according to their degree-mass distributions rather than relying solely on increasingly fragmented client volumes. The resulting weighting allows heterogeneous clients to contribute unequally to pseudograph synthesis even when each client holds only a small data shard. These results indicate that SPIRE remains effective as client-side data becomes increasingly fragmented.

\begin{table}[t]
\centering
\caption{Scalability under severe data fragmentation ($\beta=0.05$) with varying numbers of clients $K$. Best results are bold.}
\label{tab:scalability}
\setlength{\tabcolsep}{3.5pt}
\small
\begin{tabular}{llccccc}
\toprule
\textbf{Dataset} & \textbf{Method} &
\textbf{$K=5$} & \textbf{$10$} & \textbf{$20$} & \textbf{$50$} & \textbf{$100$} \\
\midrule

\multirow{4}{*}{Cora}
& FedAvg
& 32.40 & 34.74 & 31.90 & 32.31 & 31.95 \\
& GHOST
& 41.54 & 38.19 & 34.15 & 37.10 & 35.33 \\
& OASIS
& 51.56 & 41.66 & 35.42 & 40.59 & 37.59 \\
& \textbf{SPIRE}
& \textbf{55.70} & \textbf{64.06} & \textbf{57.70} & \textbf{60.35} & \textbf{58.58} \\

\midrule

\multirow{4}{*}{PubMed}
& FedAvg
& 60.80 & 61.92 & 63.40 & 61.80 & 59.35 \\
& GHOST
& 60.52 & 59.03 & 56.37 & 57.35 & 58.37 \\
& OASIS
& 64.15 & 62.48 & 61.78 & 60.30 & 59.76 \\
& \textbf{SPIRE}
& \textbf{68.20} & \textbf{73.40} & \textbf{71.30} & \textbf{71.20} & \textbf{69.39} \\

\midrule

\multirow{4}{*}{CiteSeer}
& FedAvg
& 26.10 & 36.08 & 18.10 & 36.10 & 34.90 \\
& GHOST
& 36.03 & 38.95 & 43.51 & 37.75 & 36.71 \\
& OASIS
& 41.27 & 41.29 & 38.77 & 39.37 & 38.55 \\
& \textbf{SPIRE}
& \textbf{51.30} & \textbf{55.32} & \textbf{45.10} & \textbf{52.17} & \textbf{50.28} \\

\midrule

\multirow{4}{*}{ogbn-arxiv}
& FedAvg
& 18.56 & 14.58 & 15.56 & 14.33 & 13.15 \\
& GHOST
& 18.75 & 15.53 & 15.12 & 14.79 & 14.10 \\
& OASIS
& \conf{OOM} & \conf{OOM} & \conf{OOM} & \conf{OOM} & \conf{OOM} \\
& \textbf{SPIRE}
& \textbf{23.15} & \textbf{21.04} & \textbf{21.42} & \textbf{20.80} & \textbf{18.58} \\

\bottomrule
\end{tabular}
\end{table}

\subsection{Sensitivity Analysis}
\label{sec:sensitivity_analysis}
Figure~\ref{fig:sensitivity} reports accuracy on 
Cora under varying $\lambda_{se}$ and $\lambda_{align}$.

SPIRE shows strong robustness to $\lambda_{se}$: 
across four orders of magnitude from $10^{-7}$ to 
$10^{-3}$, accuracy fluctuates only between 
$62.80\%$ and $64.06\%$, indicating that structural 
regularization is beneficial but not brittle. This 
is consistent with its role as a supporting 
generative strategy rather than a primary objective.
The alignment weight $\lambda_{align}$ is more 
consequential. Setting it to $0.001$ causes accuracy 
to drop sharply to $44.50\%$, as insufficient 
alignment allows generated features to drift from 
the target prototypes. Performance recovers 
dramatically as $\lambda_{align}$ increases, 
reaching $64.06\%$ at $\lambda_{align}=0.1$ and 
remaining stable in $[0.1, 1.0]$. This confirms 
that $\mathcal{L}_{align}$, which closes the loop 
with the Wasserstein Barycenter target 
$\mathcal{B}^*$, is the critical constraint 
governing generation quality.

\begin{figure}[t]
	\centering
	% === 左图：lambda_se ===
	\begin{minipage}[b]{0.48\linewidth}
		\centering
		\includegraphics[width=\linewidth]{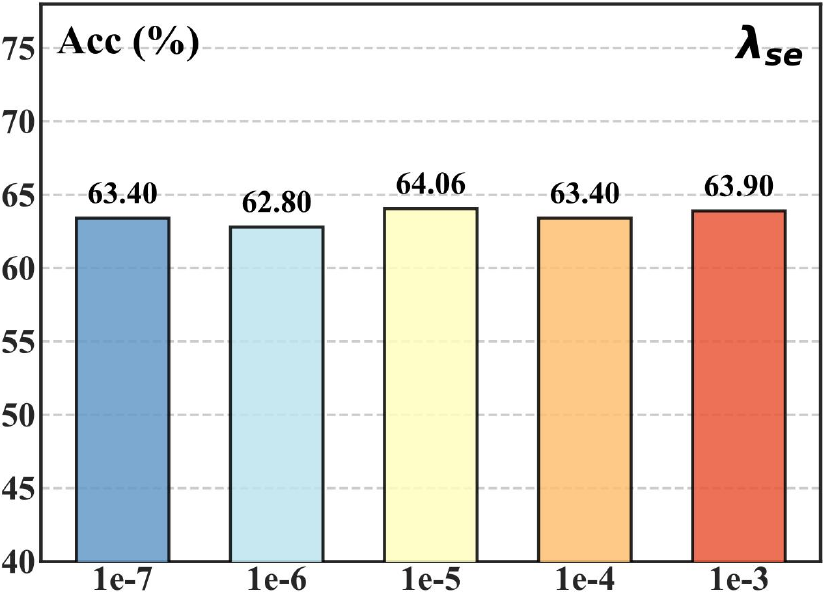}
		\centerline{\small (a) Impact of $\lambda_{se}$}
	\end{minipage}
	\hfill % 弹性间距，将两图推向两边
	% === 右图：lambda_align ===
	\begin{minipage}[b]{0.48\linewidth}
		\centering
		\includegraphics[width=\linewidth]{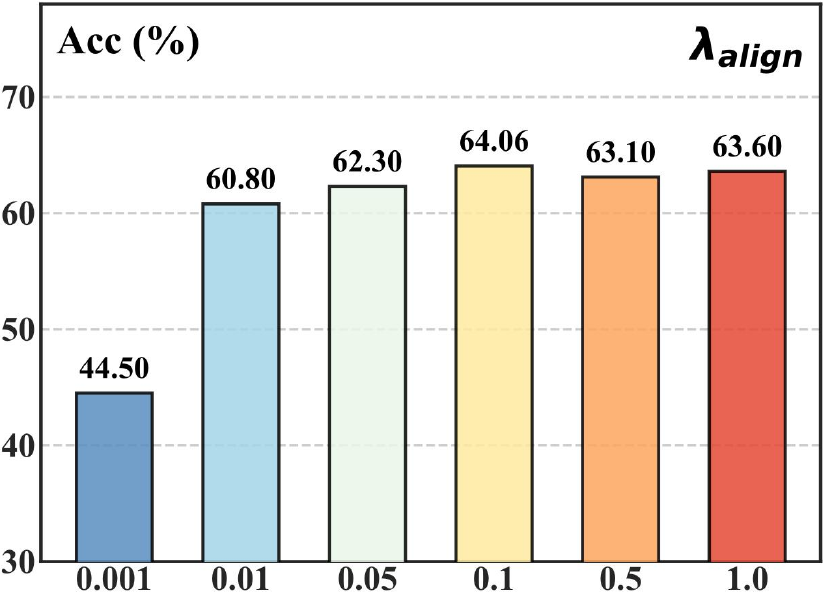}
		\centerline{\small (b) Impact of $\lambda_{align}$}
	\end{minipage}
	\caption{Hyperparameter Analysis on $\lambda_{se}$ and $\lambda_{align}$ on Cora.}
	\label{fig:sensitivity}
    \vspace*{-0.2cm}
\end{figure}

\begin{table*}[h]
\centering
\caption{Ablation study of SPIRE across seven datasets. Panel (a) evaluates the foundational impact of the core components. Panel (b) assesses the internal generative refinements via removal tests. Green and red subscripts denote the performance changes relative to the \textit{Baseline} and the \textit{Full Model}, respectively.}
\label{tab:ablation}
\resizebox{\textwidth}{!}{%
\begin{tabular}{l || cccc || ccc }
\toprule
\multirow{3}{*}{\textbf{Datasets}} & \multicolumn{4}{c||}{\textbf{(a) Core Architecture Ablation}} & \multicolumn{3}{c}{\textbf{(b) Generative Refinements}} \\
\cmidrule{2-8}
& \multirow{2}{*}{Baseline} & \textbf{+ SE} & \textbf{+ GFM} & \textbf{SPIRE} & \textbf{w/o WB} & \textbf{w/o Jitter} & \textbf{w/o $\boldsymbol{\gamma(s)}$} \\
& & (w/o GFM) & (w/o SE) & (Full Model) & (SE+GFM+Jit+$\gamma$) & (SE+GFM+WB+$\gamma$) & (SE+GFM+WB+Jit) \\
\midrule
\textbf{Cora}        & 39.20 & 44.20\up{5.00} & 59.80\up{20.60} & \cellcolor{lightyellow}\textbf{64.06}\up{24.86} & 59.40\down{4.66} & 62.33\down{1.73} & 63.79\down{0.27} \\
\textbf{CiteSeer}    & 29.90 & 41.90\up{12.00} & 47.90\up{18.00} & \cellcolor{lightyellow}\textbf{55.32}\up{25.42} & 50.50\down{4.82} & 53.75\down{1.57} & 52.50\down{2.82} \\
\textbf{PubMed}      & 63.20 & 67.30\up{4.10} & 69.70\up{6.50} & \cellcolor{lightyellow}\textbf{73.40}\up{10.20} & 70.30\down{3.10} & 71.90\down{1.50} & 72.15\down{1.25} \\
\textbf{Amz-Comp}    & 34.75 & 35.90\up{1.15} & 37.30\up{2.55} & \cellcolor{lightyellow}\textbf{38.32}\up{3.57} & 37.70\down{0.62} & 36.79\down{1.53} & 36.50\down{1.82} \\
\textbf{WikiCS}      & 18.50 & 19.32\up{0.82} & 20.04\up{1.54} & \cellcolor{lightyellow}\textbf{21.06}\up{2.56} & 20.78\down{0.28} & 19.98\down{1.08} & 20.50\down{0.56} \\
\textbf{Coauthor-CS} & 21.75 & 22.30\up{0.55} & 24.50\up{2.75} & \cellcolor{lightyellow}\textbf{29.65}\up{7.90} & 26.73\down{2.92} & 27.50\down{2.15} & 27.80\down{1.85} \\
\textbf{ogbn-arxiv}  & 9.57  & 10.11\up{0.54} & 16.30\up{6.73} & \cellcolor{lightyellow}\textbf{21.04}\up{11.47} & 19.58\down{1.46} & 20.75\down{0.29} & 20.30\down{0.74} \\
\bottomrule
\end{tabular}%
}
\end{table*}

\subsection{Ablation Study}
Table~\ref{tab:ablation} reflects the architectural hierarchy of SPIRE.

Panel~(a) isolates the core components, revealing their complementary roles across diverse graph structures. The Graph Diffusion Model (GFM, ablated via an MLP generator in \textit{w/o GFM}) acts as the feature reconstruction engine. On datasets like Cora, where node features strongly correlate with labels, GFM dominates by regenerating robust feature spaces (+20.60\%). However, feature reconstruction alone fails on sparse, highly fragmented topologies. On CiteSeer, known for isolated nodes and topological noise, Structural Entropy (SE) becomes indispensable. By replacing conventional volume-based weighting (\textit{w/o SE}) to filter noisy local structures and guide generation, SE independently yields a massive +12.00\% gain. Combining both achieves 64.06\% on Cora and 55.32\% on CiteSeer, confirming their necessary synergy: GFM executes the feature recovery, while SE enforces structural preference, preventing the generator from blindly overfitting to sparse non-IID topologies.

Panel~(b) examines internal generative refinements. Removing Wasserstein Barycenter initialization (\textit{w/o WB}), Jitter, and structural regularization (\textit{w/o $\gamma(s)$}) drops Cora accuracy by 4.66\%, 1.73\%, and 0.27\%, respectively. This progressive quantitative gap confirms the intended hierarchy. While SE and GFM act as foundational pillars addressing the volume-structure mismatch, WB, Jitter, and $\gamma(s)$ function strictly as internal stabilizers to fine-tune alignment and prevent deterministic overfitting.

\subsection{Discussion}

\begin{table}[t]
\centering
\vspace{-2pt}
\caption{Controlled stress test on Cora ($\text{Dirichlet } \beta=0.05, K=10$, 1 client corrupted) on global accuracy (\%).}
\label{tab:stress_test}
\vspace{-6pt}
\small
\begin{tabular}{lccc}
\toprule
\textbf{Injected Topology (1/10)} & \textbf{Volume-W} & \textbf{SPIRE (Ours)} & \textbf{Gain ($\Delta$)} \\
\midrule
Star Graph & 46.40 & \textbf{48.87} & \textbf{+2.47\%} \\
Degree-Rewired & 46.43 & \textbf{49.90} & \textbf{+3.47\%} \\
Random ER & 46.25 & \textbf{48.91} & \textbf{+2.66\%} \\
\bottomrule
\end{tabular}
\vspace{-4pt}
\end{table}

\textbf{Scope and Applicability of the Structural Descriptor.}
SPIRE employs normalized first-order structural entropy $S_k$ as a
compact degree-mass descriptor. Inherently, degree-level statistics
cannot resolve non-isomorphic graphs with identical degree sequences
(e.g., degree-preserving rewirings), nor determine whether highly
centralized connectivity (e.g., star graphs with artificially low
$S_k$) is beneficial for downstream learning. However, this theoretical
ambiguity does not cause catastrophic system failure in practice. In
controlled stress tests on Cora ($\text{Dirichlet }\beta=0.05, K=10$)
where one client is assigned a pathological topology
(Tab.~\ref{tab:stress_test}), SPIRE still outperforms volume-based
weighting by $+2.47\%$--$+3.47\%$ across all tested cases. Because
clients communicate feature prototypes rather than raw local edges,
server-side graph reconstruction limits the direct propagation of
local structural corruption to the global model. We therefore position
$S_k$ as an efficient $\mathcal{O}(1)$ inductive prior for differentiating
client influence under degree heterogeneity, rather than as a universal
graph-quality oracle.

\textbf{Privacy via Data Minimization.}
SPIRE enforces strict \textit{data minimization}: clients transmit only scalar summaries ($S_k, N_k$) and class prototypes $\bm{\mathcal{P}}_k$, keeping raw topologies ($\mathbf{A}_k$), node features ($\mathbf{X}_k$), and model parameters entirely local. Transmitting aggregated statistics avoids exposing sample-level data or gradient information. While SPIRE does not provide formal differential privacy (DP) guarantees against distribution leakage, its compact statistic interface is naturally compatible with standard DP perturbation mechanisms, which we leave for future privacy--utility studies.

\section{Conclusion}
In this paper, we presented SPIRE, a Structural Entropy-Driven Graph
Diffusion Generation method for one-shot federated graph learning, which addresses the volume-structure mismatch in client contribution estimation. By employing first-order structural entropy, SPIRE formulates client weighting as a principled entropy-regularized optimization and propagates this topology-aware bias into server-side pseudograph synthesis. Extensive evaluations across seven datasets demonstrate that SPIRE achieves superior classification performance and robust scalability, particularly on large graphs where conventional methods suffer from severe communication or memory bottlenecks. Furthermore, our theoretical analysis provides explicit bounds on the propagation of local entropy perturbations to the global barycentric initialization. While first-order entropy serves as a highly effective degree-level proxy without requiring raw topology exchange, future work will explore richer structural representations and formal privacy mechanisms for broader large-scale graph mining applications.

\section*{Ethical Considerations}
This work does not involve human subjects, personally identifiable information, or the collection of private user data. All experiments rely on publicly available standard graph benchmarks and are conducted in simulation. The proposed framework is intended for communication-efficient and privacy-conscious distributed graph mining, and its practical application should adhere to the privacy, security, and governance standards of the deployment domain.

%%
%% The acknowledgments section is defined using the "acks" environment
%% (and NOT an unnumbered section). This ensures the proper
%% identification of the section in the article metadata, and the
%% consistent spelling of the heading.

%%
%% If your work has an appendix, this is the place to put it.

%%
%% The next two lines define the bibliography style to be used, and
%% the bibliography file.
\bibliographystyle{ACM-Reference-Format}
\bibliography{sample-base}

\end{document}